\documentclass{article}

\usepackage{microtype}
\usepackage{subcaption}
\usepackage{balance}

\usepackage[hidelinks,breaklinks=true]{hyperref}
\usepackage{graphicx}
\usepackage{multirow}
\usepackage{xspace}

\usepackage[font=small,labelfont=bf]{caption}

\usepackage{macro_math}

\newcommand\bertlarge{BERT$_{\small \textsc{LARGE}}$\xspace}

\usepackage[accepted]{icml2020}

\icmltitlerunning{On the Generalization Effects of Linear Transformations in Data Augmentation}

\begin{document}

\twocolumn[
\icmltitle{On the Generalization Effects of Linear Transformations in Data Augmentation}

\icmlsetsymbol{equal}{*}

\begin{icmlauthorlist}
\icmlauthor{Sen Wu}{equal,stanford}
\icmlauthor{Hongyang R. Zhang}{equal,penn}
\icmlauthor{Gregory Valiant}{stanford}
\icmlauthor{Christopher R\'e}{stanford}
\end{icmlauthorlist}

\icmlaffiliation{stanford}{Department of Computer Science, Stanford University}
\icmlaffiliation{penn}{Department of Statistics, University of Pennsylvania.}

\icmlcorrespondingauthor{all authors}{\{senwu, hongyang, gvaliant, chrismre\}@cs.stanford.edu}

\icmlkeywords{Generalization, Data Augmentation}

\vskip 0.3in
]

\printAffiliationsAndNotice{\icmlEqualContribution} %

\begin{abstract}
	Data augmentation is a powerful technique to improve performance in applications such as image and text classification tasks.	Yet, there is little rigorous understanding of why and how various augmentations work.	In this work, we consider a family of linear transformations and study their effects on the ridge estimator in an over-parametrized linear regression setting.
	First, we show that transformations that preserve the labels of the data can improve estimation by enlarging the span of the training data.
	Second, we show that transformations that mix data can improve estimation by playing a regularization effect.
	Finally, we validate our theoretical insights on MNIST.
	Based on the insights, we propose an augmentation scheme that searches over the  space of transformations by how \textit{uncertain} the model is about the transformed data.
	We validate our proposed scheme on image and text datasets.
	For example, our method outperforms random sampling methods by 1.24\% on CIFAR-100 using Wide-ResNet-28-10.
	Furthermore, we achieve comparable accuracy to the SoTA Adversarial AutoAugment on CIFAR-10, CIFAR-100, SVHN, and ImageNet datasets.
\end{abstract}

\section{Introduction}

Data augmentation refers to the technique of enlarging the training dataset through pre-defined transformation functions.
By searching over a (possibly large) space of transformations through reinforcement learning based techniques, augmentation schemes including AutoAugment~\cite{cubuk2018autoaugment} and TANDA~\cite{REHDR17} have shown remarkable gains over various models on image classification tasks.
Recent work has proposed random sampling~\cite{cubuk2019randaugment} and Bayesian optimization~\cite{faster_autoaug} to reduce the search cost, since RL-based techniques are computationally expensive.
Despite the rapid progress of these transformation search methods, precisely understanding their benefits remains a mystery because of a lack of analytic tools.
In this work, we study when and why applying a family of linear transformations helps from a theoretical perspective.
Building on the theory, we develop methods to improve the efficiency of transformation search procedures.

A major challenge to understand the theory behind data augmentation is how to model the large variety of transformations used in practice in an analytically tractable framework.
The folklore wisdom behind data augmentation is that adding more labeled data improves generalization, i.e. the performance of the trained model on test data \cite{SK19}.
Clearly, the generalization effect of an augmentation scheme depends on how it transforms the data.
Previous work has analyzed the effect of Gaussian augmentation \cite{RFCLP19} and feature averaging effect of adding transformed data \cite{DGRSDR19,CDL19}.
However, there is still a large gap between the transformations studied in these works and the ones commonly used in augmentation schemes.

In this work, we consider linear transformations which represent a large family of image transformations.
We consider three categories:
(i) Label-invariant (base) transformations such as rotation and horizontal flip;
(ii) Label-mixing transformations including mixup \cite{ZCDL17,I18}, which produces new data by randomly mixing the features of two data points (e.g. a cat and a dog) -- the labels are also mixed;
(iii) Compositions of (label-invariant) transformations such as random cropping and rotating followed by horizontal flipping.

\begin{figure*}
  \begin{minipage}[b]{0.50\textwidth}
    \centering
    \begin{tabular}{p{2.15cm} c c c}
      \toprule
	  Transformations &  Example			  & Improvement    \\
	  \midrule
	  Label-invariant & Rotate ($F$)  &  $\frac{(\beta^{\top}P_X^{\perp} Fx)^2} {n}$  \\
	  Label-mixing   & Mixup           &  $\frac{\norm{X\beta}^2}{n^2}$ \\
	  Composition     & Rotate and flip $(F_1F_2)$   &  $\frac{(\beta^{\top}P_X^{\perp}F_1 F_2 x)^2} n$\\
	  \bottomrule
	\end{tabular}
    \vspace{0.05cm}
	\captionof{table}{Illustrative examples of our theoretical results. For label-invariant transformations and their compositions, they can reduce the estimation error at a rate proportional to the added information. Label-mixing reduces estimation error through a regularization effect.}
	\label{table_intro}
  \end{minipage}%
  \hfill
  \begin{minipage}[b]{0.46\textwidth}
    \centering
    \includegraphics[width=\textwidth]{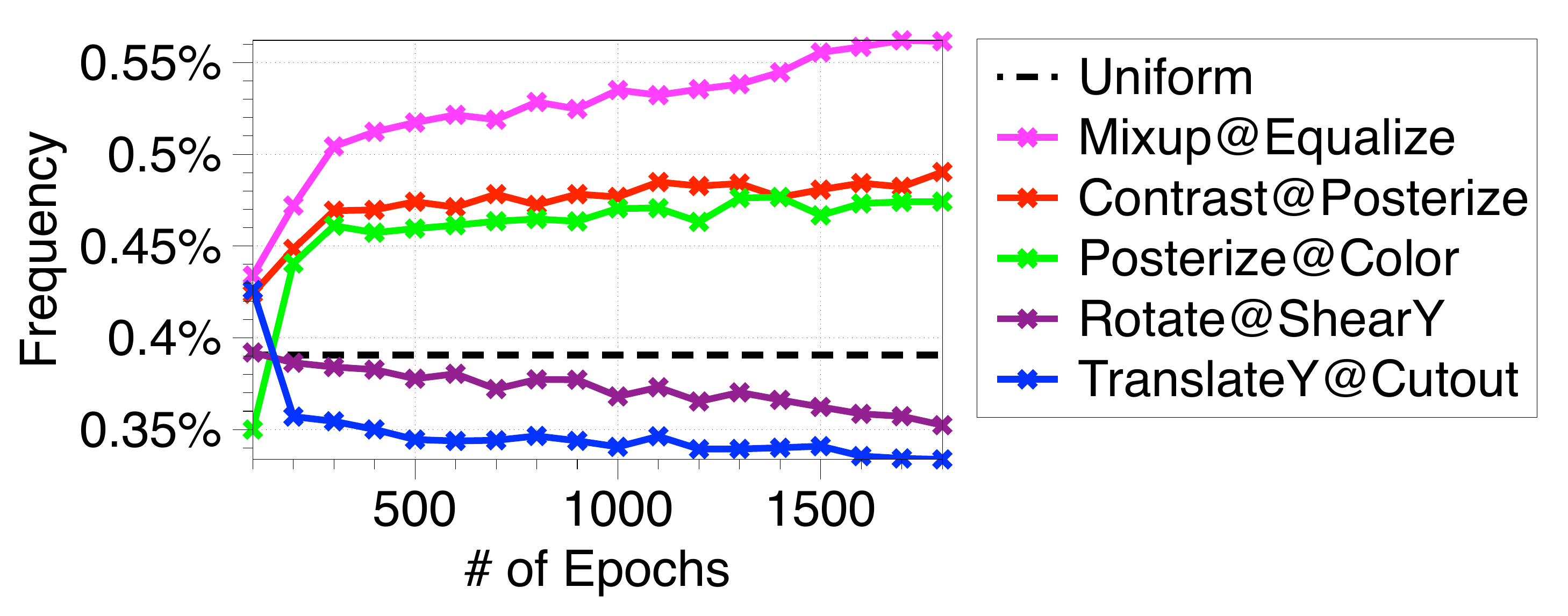}
    \vspace{-0.8cm}
    \captionof{figure}{Our method learns and reduces the frequencies of the better performing transformations during training. Base model: PyramidNet+ShakeDrop~\cite{han2017deep,yamada2018shakedrop}. Dataset: CIFAR-10. See Section \ref{sec_abl} for the details.}
    \label{fig:trans_shift}
  \end{minipage}
\end{figure*}

To gain insight into the effects of linear transformations, we consider a conceptually simple over-parametrized model proposed in \citet{BLLT19,adversarial_aug20,HMRT19} that captures the need to add more data as in image settings.
Suppose we are given $n$ training data points $x_1,\dots,x_n\in\real^p$ as $X \in \real^{n\times p}$ with labels $Y \in \real^n$.
In this setting, the labels obey the linear model under ground truth parameters $\beta \in \real^p$, i.e. $Y = X\beta + \varepsilon$, where $\varepsilon\in \real^n$ denotes i.i.d. random noise with mean zero and variance $\sigma^2$.

Importantly, we assume that $p > n$, hence the span of the training data points does not include the entire space of $\real^p$.
We consider the ridge estimator $\hat{\beta}$ with a fixed $\ell_2$ regularization parameter
and measure the estimation error of the ridge estimator by its distance to $\beta$.

Our first insight is that {\it label-invariant transformations} can {\it add new information} to the training data.
We use our theoretical setup described above to present a precise statement.
For a data point $(x, y)$, a label-invariant transformation $F\in\real^{p\times p}$ in the regression setting produces a new data point $Fx$ with label $y$.
We show that by adding $P_X^{\perp} Fx$ which is outside the span of the training data, we reduce the estimation error of the ridge estimator at a rate proportional to $(\beta^{\top}P_X^{\perp} Fx)^2 / n$ (see Theorem \ref{thm_bias_orth} for the result).
Here
$P_X^{\perp}$ denotes the projection to the orthogonal space of $X^{\top}X$.
In Section \ref{sec_aug}, we validate the insight for classification settings on MNIST by showing that a label-invariant transformation can indeed add new information by reducing the bias (intrinsic error score) of the model.

Our second insight is that {\it label-mixing transformations} can {\it provide a regularization effect}.
In the theoretical setup, given two data points $(x_1, y_1)$ and $(x_2, y_2)$, the mixup transformation with parameter $\alpha$ sampled from the Beta distribution produces $\alpha x_1 + (1-\alpha) x_2$ with label $\alpha y_1 + (1-\alpha) y_2$ \cite{ZCDL17}.
Interestingly, mixup does not {\it add new information} since the mixed data lies in the span of the training data.
However, we show that mixup plays a regularization effect through shrinking the training data relative to the $\ell_2$ regularization.
The final result is that adding the mixup sample reduces estimation error by $\Theta(\norm{X\beta}^2 / n^2)$ (see Theorem \ref{thm_random_mixup} for the result).
In Section \ref{sec_aug}, we validate the insight on MNIST by showing that mixing same-class digits can reduce the variance (instability) of the model.

Finally, for {\it compositions of label-invariant transformations}, we can show their effect for adding new information as a corollary of the base case (see Corollary \ref{cor_compose} for details and we provide the validation on MNIST in Section \ref{sec_aug}).
We provide an illustration of the results in Table \ref{table_intro}.

\vspace{0.075in}
{\bf Algorithmic results.}
	Building on our theory, we propose an uncertainty-based random sampling scheme which, among the transformed data points, picks those with the highest loss, i.e. ``providing the most information''.
	We find that there is a large variance among the performance of different transformations and their compositions.
	Unlike RandAugment~\cite{cubuk2019randaugment}, which averages the effect of all transformations, the idea behind our sampling scheme is to better select the transformations with strong performance.

	We show that our proposed scheme applies to a wide range of datasets and models.
	First, our sampling scheme achieves higher accuracy by finding more useful transformations compared to RandAugment on three different CNN architectures. For example, our method outperforms RandAugment by $0.59\%$ on CIFAR-10 and $1.24\%$ on CIFAR-100 using Wide-ResNet-28-10, and $1.54\%$ on ImageNet using ResNet-50.
	Figure \ref{fig:trans_shift} shows the \textit{key insight} behind our proposed method.
	We compare the frequencies of transformations (cf. Section \ref{sec_exp} for their descriptions) sampled by our method.
	As the training procedure progresses, our method gradually learns transformations providing new information (e.g. Rotate followed by ShearY) and \textit{reduces} their frequencies.
	On the other hand, the frequencies of transformations such as Mixup followed by Equalize  \textit{increase} because they produce samples with large errors that the model cannot learn.

	Second, we achieve similar test accuracy on CIFAR-10 and CIFAR-100 compared to the state-of-the-art Adversarial AutoAugment~\cite{adversarial_autoaugment}.
	By contrast, our scheme is conceptually simpler and computationally more efficient; Since our scheme does not require training an additional adversarial network, the training cost reduces by at least 5x.
	By further enlarging number of augmented samples by 4 times, we achieve test accuracy $85.02\%$ on CIFAR-100, which is higher than Adversarial AutoAugment by $0.49\%$.
	Finally, as an extension, we apply our scheme on a sentiment analysis task and observe improved performance compared to the previous work of \citet{xie2019unsupervised}. %

\textbf{Notations.}
We describe several standard notations that will be used later.
We use the big-O notation $f(n) \le O(g(n))$ to indicate that $f(n) \le C\cdot g(n)$ for a fixed constant $C$ and large enough $n$.
We use $f(n) \lesssim g(n)$ to denote that $f(n) \le O(g(n))$.
For a matrix $X\in\real^{d_1\times d_2}$, let $X^{\dagger}$ denote the Moore-Penrose psuedoinverse of $X$.

\section{Preliminaries}\label{sec_modeling}

Recall that $X = [x_1^{\top}, \dots, x_n^{\top}]\in\real^{n\times p}$ denotes the training data, where $x_i\in\real^{p}$ for $1\le i\le n$.
Let $\id_p\in\real^{p\times p}$ denote the identity matrix.
Let $P_X$ denote the projection matrix onto the row space of $X$. Let $P_X^{\perp} = \id_p - P_X$ denote the projection operator which is orthogonal to $P_X$.
The ridge estimator with parameter $\lambda < 1$ is given by
\begin{align*}
	\hat{\beta}(X, Y) = (X^{\top}X + n\lambda \id)^{-1} X^{\top} Y,
\end{align*}
which arises from solving the mean squared loss
\begin{align*}
  \frac 1 {2n} \min_{w\in \real^p} \normFro{Xw - Y}^2 + \frac {\lambda} 2 \norm{w}^2.
\end{align*}
We use $\hat{\beta}^F$ to denote the ridge estimator when we augment $(X, Y)$ using a transformation function $F$.
The estimation error of $\hat{\beta}$ is given by
$  e(\hat{\beta}) \define \exarg{\varepsilon}{\norm{\hat{\beta} - \beta}^2}.$
Next we define the label-invariance property for regression settings.
\begin{definition}[Label-invariance]
	For a matrix $F\in \real^{p\times p}$, we say that $F$ is label-invariant over $\cX\subseteq\real^{p}$ for $\beta\in\real^p$ if
	\[ x^{\top} \beta = (Fx)^{\top} \beta, \text{ for any } x \in \cX. \]
\end{definition}
\vspace{-0.1in}

As an example, consider a 3-D setting where {$\cX = \set{(a, b, 0) : \forall a, b\in\real}$.
Let $\beta = (1, -1/2, 1/2)^{\top}$ and}
{\small\[ F = \left(\begin{array}{c c c}
	1 & 0 & 0 \\
				0 & \cos\frac{\pi}{2} & \sin\frac{\pi}{2} \\
				0 & -\sin\frac{\pi}{2} & \cos\frac{\pi}{2}
	\end{array}\right).\]}%
\vspace{-0.1in}

Then $(\id - F^{\top})\beta = (0, 0, 1)$ is orthogonal to $\cX$. Hence $F$ is a label-preserving rotation with degree $\frac{\pi}2$ over $\cX$ for $\beta$.

In addition to rotations and mixup which we have described, linear transformations are capable of modeling many image transformations.
We list several examples below.

{\it Horizontal flip.} The horizontal flip of a vector along its center can be written as an orthonormal transformation where
  {\small\begin{align*}
    F = \left(
    \begin{array}{ccccc}
      0 & \dots && 0     & 1 \\
      0 & \dots && 1     & 0 \\
      1 & 0     && \dots & 0
    \end{array}
    \right).
  \end{align*}}
\vspace{-0.2in}

{\it Additive composition}. {For two transformations $F_1$ and $F_2$, their additive composition gives $x^{\aug} = F_1 x + F_2 x$.
	For example, {\it changing the color of an image and adding Gaussian noise} is an additive composition with $F_1 x$ being a color transformation and $F_2 x$ being a Gaussian perturbation.}

{\it Multiplicative composition}. {In this case, $x^{\aug} = F_1 F_2 x$.
	For example, a {\it rotation followed by a cutout} of an image is a multiplicative composition with $F_2$ being a rotation matrix and $F_1$ being a matrix which zeros out certain regions of $x$.}

\section{Analyzing the Effects of Transformation Functions in an Over-parametrized Model}\label{sec_analysis}

How should we think about the effects of applying a transformation?
Suppose we have an estimator $\hat{\beta}$ for a linear model $\beta\in\real^p$.
The bias-variance decomposition of $\hat{\beta}$ is
{\begin{align}
	e(\hat{\beta}) = \underbrace{\bignorm{\exarg{\varepsilon}{\hat{\beta}} - \beta}^2}_{\text{bias}}
	+ \underbrace{\bignorm{\hat{\beta} - \exarg{\varepsilon}{\hat{\beta}}}^2}_{\text{variance}} \label{eq_bv}
\end{align}}
In the context of data augmentation, we show the following two effects from applying a transformation.

\textit{Adding new information.} The bias part measures the error of $\hat{\beta}$ after taking the expectation of $\varepsilon$ in $\hat{\beta}$.
Intuitively, the bias part measures the intrinsic error of the model after taking into account the randomness which is present in $\hat{\beta}$.
A transformation may improve the bias part if $x^{\aug}$ is outside $P_X$.
We formalize the intuition in Section \ref{sec_individual}.

\textit{Regularization.} %
Without adding new information, a transformation may still reduce $e(\hat{\beta})$ by playing a regularization effect.
For example with mixup, $x^{\aug}$ is in $P_X$.
Hence adding $x^{\aug}$ does not add new information to the training data.
However, the mixup sample reweights the training data and the $\ell_2$ regularization term in the ridge estimator.
We quantify the effect in Section \ref{sec_mixup}.

\begin{figure*}[ht!]
	\begin{minipage}[b]{1.00\textwidth}
		\begin{algorithm}[H]
			\small
			\caption{{Uncertainty-based sampling of transformations}}\label{alg_unc}
			\begin{algorithmic}[1]
				\STATE {\bfseries Input.} a batch of $B$ data point $(x_1, y_1)$, $(x_2, y_2)$, \dots, $(x_B, y_B)$

				{\bfseries Require.} $K$ transformations $F_1, F_2,\dots, F_K$, the current model $\hat{\beta}$ and the associated loss $l_{\hat{\beta}}$. Default transformations $G_1,\dots,G_H$.

				{\bfseries Param.} {$L$: number of composition steps; $C$: number of augmented data per input data; $S$: number of selected data points used for training}.

				{\bfseries Return:} a list of $B\cdot S$ transformed data $T$.
				\FOR{$i = 1,\dots, B$}
						\FOR{$k = 1,\dots, C$}
							\STATE Let $F_{j_1}, \dots, F_{j_L}$ be $L$ transformations sampled uniformly at random without replacement from $F_1, \dots, F_K$.
							\STATE Compute $x^{\aug}_k$ and $y^{\aug}_k$ by applying $F_{j_1}, \dots, F_{j_L}$ and the default ones $G_1,\dots, G_H$ sequentially on $(x_i,y_i)$.
							\STATE Infer the loss of $(x^{\aug}_k, y^{\aug}_k)$ as $l^{k} = l_{\hat{\beta}}(x^{\aug}_k, y^{\aug}_k)$.
						\ENDFOR
						\STATE Choose the $S$ data points from $\set{x^{\aug}_k, y^{\aug}_k}_{k=1}^C$ that have the highest losses $l^k$ and add them to $T$.
				\ENDFOR
			\end{algorithmic}
		\end{algorithm}
	\end{minipage}
\end{figure*}

\subsection{Label-Invariant Transformations}\label{sec_individual}

We quantify the effect of label-invariant transformations which add new information to the training data.

{\bf Example.} %
Given a training data point $(x, y)$, let $(x^{\aug}, y^{\aug})$ denote the augmented data where $y^{\aug}=y$.
Intuitively, based on our training data $(X, Y)$, we can infer the label of any data point within $P_X$ (e.g. when $\sigma$ is sufficiently small).
If $x^{\aug}$ satisfies that $P_X x^{\aug} = 0$, then add $(x^{\aug}, y^{\aug})$ does not provide new information.
On the other hand if $P_X^{\perp}x^{\aug} \neq 0$, then adding $x^{\aug}$ expands the subspace over which we can obtain accurate estimation.
Moreover, the added direction corresponds to $P_X^{\perp} x^{\aug}$.
Meanwhile, since $y^{\aug} = y$, it contains a noise part which is correlated with $\varepsilon$.
Hence the variance of $\hat{\beta}$ may increase.

To derive the result, we require a technical twist: instead of adding the transformed data $Fx$ directly, we are going to add its projection onto $P_X^{\perp}$.
This is without loss of generality since we can easily infer the label of $P_X^{\perp} Fx$.
Let $\hat{\beta}^F$ denote the ridge estimator after adding the augmented data to the $(X, Y)$.
Our result is stated as follows.

\begin{theorem}\label{thm_bias_orth}
	Suppose we are given a set of $n$ covariates $X\in\real^{n\times p}$ with labels $Y = X\beta + \varepsilon$, where $\beta\in\real^p$ and $\varepsilon\in\real^n$ has mean $0$ and variance $\sigma^2$.
	Let $F\in\real^{p\times p}$ be a label-invariant transformation over $X$ for $\beta$.
	Let $(x, y)$ be a data point from $(X, Y)$.

	Let $z = P_X^{\perp}Fx$ and $y^{\aug} = y - \diag{(X^{\top})^{\dagger} Fx} Y$.
	Suppose that we augment $(X, Y)$ with $(z, y^{\aug})$.
	Then, we have that
	{\small
	\begin{align}
		e(\hat{\beta}) - e(\hat{\beta}^{F}) \ge&
		\frac {(2\lambda(n+1) - (1-2\lambda)\norm{z}^2)} {\lambda^2 (n + 1 + \norm{z}^2)^2}  \cdot \inner{z}{\beta}^2 \nonumber\\
	&- \frac {(2 + 2 \frac{\norm{P_X Fx}^2} {\mu_{\min}(X)^2})} {\lambda^2(n+1)^2} \cdot \sigma^2\norm{z}^2. \label{eq_thm_new}
	\end{align}}%
	Moreover, when $\norm{z}^2 = o(n)$
	and $\frac {\inner{z}{\beta}^2} {\norm{z}^2} \ge \log{n}(1 + \frac{\norm{P_X Fx}^2}{\mu_{\min}(X)^2})\frac{\sigma^2}{\lambda n}$ (including $\sigma = 0$), we have
	{\small\begin{align}
		       0\le{e(\hat{\beta}) - e(\hat{\beta}^{F}) - (2 + o(1)) \frac {\inner z {\beta}^2} {\lambda n}}\le {\frac{\poly(\gamma/\lambda)}{n^2}}. \label{eq_thm_simple}
	\end{align}}
\end{theorem}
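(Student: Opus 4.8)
The plan is to reduce $\hat{\beta}^{F}$ to an explicit rank-one update of a ridge estimator and then compare the two estimation errors by an orthogonal decomposition. Write $A \define X^{\top}X + (n+1)\lambda\id$. Stacking the single new row $z^{\top}$ with label $y^{\aug}$ contributes $zz^{\top}$ to the Gram matrix and $z\,y^{\aug}$ to $X^{\top}Y$, and the $n+1$ samples carry regularization $(n+1)\lambda$, so $\hat{\beta}^{F} = (A + zz^{\top})^{-1}(X^{\top}Y + z\,y^{\aug})$. The single fact driving everything is that $z = P_X^{\perp}Fx$ is orthogonal to $\mathrm{row}(X)$: hence $Xz = 0$, $X^{\top}Xz = 0$, so $Az = (n+1)\lambda z$ and $A^{-1}z = z/((n+1)\lambda)$, and also $z^{\top}X^{\top}Y = (Xz)^{\top}Y = 0$.

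First I would apply the Sherman--Morrison identity to $(A + zz^{\top})^{-1}$. Because $A^{-1}z$ is parallel to $z$ and $z^{\top}X^{\top}Y = 0$, the cross terms vanish and the update collapses to
\[ \hat{\beta}^{F} = \hat{\beta}_{n+1} + \frac{y^{\aug}}{(n+1)\lambda + \norm{z}^2}\,z, \qquad \hat{\beta}_{n+1} \define (X^{\top}X + (n+1)\lambda\id)^{-1}X^{\top}Y. \]
Thus augmentation only adds a component along $z$, on top of replacing $n\lambda$ by $(n+1)\lambda$ inside the row-space estimator. Both $\hat{\beta}$ and $\hat{\beta}_{n+1}$ lie in $\mathrm{row}(X)$, hence are orthogonal to $z$.

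Next I would split the error by Pythagoras. Writing $\beta = P_X\beta + \tfrac{\inner{z}{\beta}}{\norm{z}^2}z + \beta_{\perp}$ with $\beta_{\perp}$ orthogonal to both $\mathrm{row}(X)$ and $z$, the $\beta_{\perp}$ contribution cancels and, with $\kappa \define (n+1)\lambda + \norm{z}^2$,
\[ e(\hat{\beta}) - e(\hat{\beta}^{F}) = \Big(\exarg{\varepsilon}{\norm{\hat{\beta} - P_X\beta}^2} - \exarg{\varepsilon}{\norm{\hat{\beta}_{n+1} - P_X\beta}^2}\Big) + \Big(\frac{\inner{z}{\beta}^2}{\norm{z}^2} - \norm{z}^2\exarg{\varepsilon}{\Big(\tfrac{y^{\aug}}{\kappa} - \tfrac{\inner{z}{\beta}}{\norm{z}^2}\Big)^2}\Big). \]
For the second bracket I would show, using the definition of $y^{\aug}$ and label-invariance $(Fx)^{\top}\beta = x^{\top}\beta$, that $\exarg{\varepsilon}{y^{\aug}} = (Fx)^{\top}P_X^{\perp}\beta = \inner{z}{\beta}$ --- precisely where the added direction carries unbiased new information --- while its noise is $(e_j - u)^{\top}\varepsilon$ (with $(x,y)=(x_j,y_j)$, $e_j\in\real^n$ the $j$-th basis vector, and $u = (X^{\top})^{\dagger}Fx$), so $\mathrm{Var}(y^{\aug}) = \sigma^2\norm{e_j - u}^2 \le \sigma^2(2 + 2\norm{u}^2)$ and $\norm{u}^2 \le \norm{P_X Fx}^2/\mu_{\min}(X)^2$ via $\norm{P_X Fx}^2 = \norm{X^{\top}u}^2 \ge \mu_{\min}(X)^2\norm{u}^2$.

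Substituting $y^{\aug} = \inner{z}{\beta} + \delta$ with $\exarg{\varepsilon}{\delta}=0$, the second bracket equals $\tfrac{\inner{z}{\beta}^2(2(n+1)\lambda+\norm{z}^2)}{\kappa^2} - \tfrac{\norm{z}^2\mathrm{Var}(y^{\aug})}{\kappa^2}$; bounding the variance as above and the first (regularization-change) bracket by $O(1/n^2)$ --- it reflects only the perturbation $n\lambda\mapsto(n+1)\lambda$ --- yields \eqref{eq_thm_new}. For \eqref{eq_thm_simple} I would use $\norm{z}^2 = o(n)$ to write $\kappa = (n+1)\lambda(1+o(1))$, making the signal term $(2+o(1))\inner{z}{\beta}^2/(\lambda n)$; the stated signal-to-noise condition forces the $\sigma^2$ term to be smaller by a $\log n$ factor (giving the lower bound $\ge 0$), and Taylor-expanding the exact expression collects the residual $\poly(\gamma/\lambda)/n^2$ terms for the upper bound. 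I expect the main obstacle to be the closed form of Step~2: the rank-one update decouples from the row space only because $A^{-1}z \parallel z$ and $z^{\top}X^{\top}Y = 0$, and a secondary difficulty is bounding the regularization-change bracket and $\mathrm{Var}(y^{\aug})$ tightly enough that both fold into the $1/n^2$ remainder.
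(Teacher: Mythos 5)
Your route is, at bottom, the paper's route repackaged: both arguments rest on the Sherman--Morrison update of the Gram matrix together with the two orthogonality facts $Xz=0$ and $z^{\top}X^{\top}Y=0$, and your analysis of the $z$-direction --- $\exarg{\varepsilon}{y^{\aug}}=\inner{z}{\beta}$ by label-invariance, $\mathrm{Var}(y^{\aug})=\sigma^2\norm{e_j-u}^2\le 2\sigma^2(1+\norm{P_XFx}^2/\mu_{\min}(X)^2)$ --- is exactly the paper's bound on $\exarg{\varepsilon}{(\varepsilon^{\aug})^2}$. The closed form $\hat{\beta}^{F}=\hat{\beta}_{n+1}+\tfrac{y^{\aug}}{(n+1)\lambda+\norm{z}^2}\,z$ followed by the Pythagorean split is a cleaner organization than the paper's mode-by-mode bookkeeping in the SVD basis, and your constant $\tfrac{2(n+1)\lambda+\norm{z}^2}{((n+1)\lambda+\norm{z}^2)^2}$ for the signal gain is what a careful pass through the paper's own computation also produces (it agrees with the displayed constant in \eqref{eq_thm_new} to the order relevant for \eqref{eq_thm_simple}).

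The genuine gap is the first bracket, $\exarg{\varepsilon}{\norm{\hat{\beta}-P_X\beta}^2}-\exarg{\varepsilon}{\norm{\hat{\beta}_{n+1}-P_X\beta}^2}$, which you assert is $O(1/n^2)$ because it ``reflects only the perturbation $n\lambda\mapsto(n+1)\lambda$.'' That perturbation is not harmless. On the $i$-th singular direction of $X$, with eigenvalue $d_i$ of $X^{\top}X$ and coefficient $b_i$ of $\beta$, the squared shrinkage bias moves from $\tfrac{n^2\lambda^2}{(d_i+n\lambda)^2}b_i^2$ to $\tfrac{(n+1)^2\lambda^2}{(d_i+(n+1)\lambda)^2}b_i^2$, an \emph{increase} of order $\tfrac{\lambda^2 d_i b_i^2}{n^2(d_i/n+\lambda)^3}$; summing over modes gives a negative contribution of order $\norm{X\beta}^2/(\lambda n^2)$, which under the normalization $\norm{x_k}\le\gamma$ can be as large as $\gamma^2\norm{\beta}^2/(\lambda n)$ --- the same order as the main term $\inner{z}{\beta}^2/(\lambda n)$, and with the unfavorable sign. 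The row-space variance does decrease, but only by $O(\sigma^2\gamma^2/(\lambda^2 n^2))$, so it cannot compensate. Hence this bracket cannot simply be folded into the $\poly(\gamma/\lambda)/n^2$ remainder, and in particular the lower bound $0$ in \eqref{eq_thm_simple} does not follow without a separate argument controlling $\norm{X\beta}^2$ against $\inner{z}{\beta}^2$. You have correctly isolated the crux --- this is precisely the step where the paper's own proof is most delicate (it disposes of the row-space correction by a dominance claim about the diagonal matrix $\Delta=K_{n+1}-K_n$ restricted to the row space, whose sign must be checked) --- but you have not resolved it; this term needs to be computed explicitly rather than deferred.
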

In the above, $\mu_{\min}(X)$ denotes the smallest singular value of $X$.
For a vector $v$, $\diag{v}\in\real^{d\times d}$ denotes a diagonal matrix with the $i$-th diagonal entry being the $i$-th entry of $v$.
$\poly(\gamma/\lambda)$ denotes a polynomial of $\gamma/\lambda$.

Theorem \ref{thm_bias_orth} shows that the reduction of estimation error scales with $\inner{z}{\beta}^2$, the correlation between the new signal and the true model.
Intuitively, as long as $n$ is large enough, then equation \eqref{eq_thm_simple} will hold.
The proof is by carefully comparing the bias and variance of $\hat{\beta}$ and $\hat{\beta}^{\aug}$ after adding the augmented data point.
On the other hand, we remark that adding $Fx$ directly into $X$ does not always reduce $e(\hat{\beta})$, even when $\inner{z}{\beta}^2 = \Theta(1)$ (cf. \cite{adversarial_aug20}).

Another remark is that for augmenting a sequence of data points, one can repeated apply Theorem \ref{thm_bias_orth} to get the result. We leave the details to Appendix \ref{sec_proof_bias}.

{\bf Connection to augmentation sampling schemes.}
We derive a corollary for the idea of random sampling used in RandAugment.
Let $\set{F_i}_{i=1}^K$ be a set of $K$ label-invariant transformations.
We consider the effect of randomly sampling a transformation from $\set{F_i}_{i=1}^K$.

\begin{corollary}\label{cor_uniform}
	In the setting of Theorem \ref{thm_bias_orth}, let $\set{F_i}_{i=1}^K$ be $K$ label-invariant transformations.
	For a data point $(x, y)$ from $(X, Y)$ and $i=1,\dots, K$, let $z_i = P_X^{\perp} F_ix$ and $y_i^{\aug} = y - \diag{(X^{\top})^{\dagger}F_i x} Y$.

	Suppose that $(z, y^{\aug})$ is chosen uniformly at random from $\set{z_i, y_i^{\aug}}_{i=1}^K$.
	Then we have that
	{\small\begin{align*}
		       & \exarg{z, y^{\aug}}{e(\hat{\beta}) - e(\hat{\beta}^{\unif\set{F_i}_{i=1}^K})} \\
		=& \frac {2 + o(1)} K \bigbrace{\sum_{i=1}^K {\frac{\inner{z_i}{\beta}^2}{\lambda n}}}
		+ {\frac{\poly(\gamma/\lambda)}{n^2}}.
	\end{align*}}
\end{corollary}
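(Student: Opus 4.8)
The plan is to reduce the statement to $K$ separate applications of Theorem~\ref{thm_bias_orth}, one per transformation, and then average. Since the augmented pair $(z, y^{\aug})$ is drawn uniformly from $\set{z_i, y_i^{\aug}}_{i=1}^K$, the estimator $\hat{\beta}^{\unif\set{F_i}_{i=1}^K}$ is itself random: conditioned on the draw selecting index $i$, it equals $\hat{\beta}^{F_i}$, the ridge estimator obtained by augmenting $(X, Y)$ with $(z_i, y_i^{\aug})$. The randomness of this draw is independent of the noise $\varepsilon$ over which $e(\cdot)$ is defined, and $e(\hat{\beta})$ does not depend on the draw at all, so the left-hand side collapses to an arithmetic mean,
\[
\exarg{z, y^{\aug}}{e(\hat{\beta}) - e(\hat{\beta}^{\unif\set{F_i}_{i=1}^K})}
= \frac 1 K \sum_{i=1}^K \bigbrace{e(\hat{\beta}) - e(\hat{\beta}^{F_i})}.
\]

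First I would check that each $F_i$ meets the hypotheses of Theorem~\ref{thm_bias_orth}: label-invariance over $X$ for $\beta$, $\norm{z_i}^2 = o(n)$, and the signal-to-noise lower bound on $\inner{z_i}{\beta}^2 / \norm{z_i}^2$. These are all inherited from ``the setting of Theorem~\ref{thm_bias_orth},'' so \eqref{eq_thm_simple} applies verbatim to each index. Its two-sided bound sandwiches $e(\hat{\beta}) - e(\hat{\beta}^{F_i})$ between $(2 + o(1)) \inner{z_i}{\beta}^2 / (\lambda n)$ and that same quantity plus $O(\poly(\gamma/\lambda)/n^2)$, which I would rewrite as the equality
\[
e(\hat{\beta}) - e(\hat{\beta}^{F_i})
= (2 + o(1)) \frac{\inner{z_i}{\beta}^2}{\lambda n} + O\left(\frac{\poly(\gamma/\lambda)}{n^2}\right).
\]
Substituting this into the mean above and distributing the sum over the two terms yields the claimed leading term $\frac{2 + o(1)}{K} \sum_{i=1}^K \inner{z_i}{\beta}^2 / (\lambda n)$ together with a residual assembled from the $K$ error terms.

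The only point that requires care---and which I expect to be the sole, and mild, obstacle---is the aggregation of the lower-order terms. Treating $K$ as a fixed constant independent of $n$, the $K$ residuals of order $\poly(\gamma/\lambda)/n^2$ sum to the same order and survive division by $K$ unchanged, so the additive $\poly(\gamma/\lambda)/n^2$ in the conclusion is untouched. To pull the common factor $2 + o(1)$ outside the sum I would record the per-index factor as $2 + o_i(1)$, set $o(1) \define \max_i o_i(1)$, and invoke non-negativity of $\inner{z_i}{\beta}^2$:
\[
\sum_{i=1}^K (2 + o_i(1)) \inner{z_i}{\beta}^2 \le (2 + o(1)) \sum_{i=1}^K \inner{z_i}{\beta}^2,
\]
with the matching lower bound obtained identically using $\min_i o_i(1)$, which is likewise $o(1)$ since $K$ is finite. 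This confirms that the factor $(2 + o(1))$ may be factored out of the average, completing the reduction; all of the substantive estimation work is already carried by Theorem~\ref{thm_bias_orth}.
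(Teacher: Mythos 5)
Your proposal is correct and follows exactly the route the paper intends: the paper's entire proof is the remark that the corollary ``follows directly from Theorem \ref{thm_bias_orth},'' and your argument simply fills in the implicit steps (linearity of expectation over the uniform draw, per-index application of equation \eqref{eq_thm_simple}, and aggregation of the $K$ residual terms with $K$ fixed). The care you take in factoring $2+o(1)$ out of the sum is a reasonable bit of bookkeeping that the paper leaves unstated.
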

The proof follows directly from Theorem \ref{thm_bias_orth}.
Corollary \ref{cor_uniform} implies that the effect of random sampling is simply an average over all the transformations.
However, if there is a large variance among the effects of the $K$ transformations, random sampling could be sub-optimal.

{\bf Our proposed scheme: uncertainty-based sampling.} Our idea is to use the sampled transformations more efficiently via an {\it uncertainty-based} sampling scheme.
For each data point, we randomly sample $C$ (compositions of) transformations.
We pick the ones with the highest losses after applying the transformation.
This is consistent with the intuition of Theorem \ref{thm_bias_orth}.
The larger $\inner{z}{\beta}^2$, the higher the loss of $(x^{\aug}, y^{\aug})$ would be under $\hat{\beta}$.

Algorithm \ref{alg_unc} describes our procedure in detail.
In Line 4-6, we compute the losses of $C$ augmented data points.
In Line 8, we select the $S$ data points with the highest losses for training.
For each batch, the algorithm returns $S \dot B$ augmented samples.

\subsection{Label-Mixing Transformations: Mixup}\label{sec_mixup}

We show that mixup plays a regularization effect through reweighting the training data and the $\ell_2$ regularization term.

Specifically, we analyze the following procedure.
Let $\alpha \in [0, 1]$ be sampled from a Beta distribution with fixed parameters \cite{ZCDL17}.
Let $(x_i, y_i)$ and $(x_j, y_j)$ be two data points selected uniformly at random from the training data.
We add the mixup sample $(x^{\aug}, y^{\aug})$ into $(X, Y)$, with $x^{\aug} = \alpha x_i + (1-\alpha) x_j$ and $y^{\aug} = \alpha y_i + (1-\alpha) y_j$.
Additionally, let $(X^{\aug}, Y^{\aug})$ denote the augmented feature matrix and label vector.

We illustrate that adding the mixup sample is akin to shrinking the training data relative to the regularization term.
Assuming that $\sum_{i=1}^n x_i = 0$, we have %
\begin{align}
		& \exarg{x^{\aug}}{{x^{\aug}}{x^{\aug}}^{\top}} = \frac {(1-2\alpha)^2} n  X^{\top}X \nonumber \\
		\Rightarrow& \exarg{x^{\aug}}{{X^{\aug}}^{\top} X^{\aug}} %
							 = \Big(1 + \frac{(1-2\alpha)^2}{n}\Big)X^{\top}X, \label{eq_mixup_lambda_adj1} %
\end{align}
Hence, adding the mixup sample increases the scale of the $X^{\top}X$ term in the covariance matrix. %

\emph{Additionally, to ensure this result holds, we need to adjust the regularization parameter $\lambda$ to $\frac{n}{n + 1} \lambda$ after inserting the mixup sample}.
More precisely, we consider
\begin{align}
	\hat\beta^{\mixup} = \Big({X^{\aug}}^{\top} X^{\aug} + n\lambda\id \Big)^{-1} {X^{\aug}}^{\top} Y^{\aug}.
\end{align}
Compared with OLS, there is now an additional term in the covariance above, which comes from inserting a mixup sample.
Thus, due to this addition, the bias of $\hat\beta^{\mixup}$ will reduce (although the variance may still increase).
Based on this intuition, we describe our result formally below.

\begin{theorem}\label{thm_random_mixup}
	Let $\gamma > 1$ be a fixed constant which does not grow with $n$.
  Let $\set{x_k}_{k=1}^n$ be $n$ training samples which satisfy that $\sum_{k=1}^n{x_k} = 0$ and $\norm{x_k} \le \gamma$, for all $1\le k \le n$.
	Suppose $\bignormFro{X} \neq 0$.
	For any noise variance level $\sigma$ that satisfies
	\[ \sigma^2 \le \frac{n \lambda^4 \bignorm{X\beta}^2 }{2 (\gamma+\lambda)^3 \bignormFro{X}^2} {}, \]
  in expectation over the randomness of $\alpha, x_i, x_j, \varepsilon$, we have that for large enough values of $n$, the following holds:
	\begin{align} \exarg{\alpha, x_i, x_j, \varepsilon}{e(\hat{\beta}) - e(\hat{\beta}^{\mixup})} \ge \frac{\lambda^2 c (\gamma + \lambda)^{-3} \norm{X\beta}^2}{2 n^2} {}{}, \label{eq_mixup_dec}
	\end{align}
	where $c$ is a fixed constant that depends on the parameters of the Beta distribution for generating $\alpha$.
\end{theorem}

We remark that the assumption that $\sum_{i=1}^n{x_i} = 0$ is indeed satisfied in typical image classification settings. This is because a normalization step, which normalizes the mean of every RGB channel to be zero, is usually applied on all the images in practice.

The intuition behind the proof of Theorem \ref{thm_random_mixup} is that the mixup sample shrinks the training data, which reduces the bias of the estimator.
Thus, for small enough $\sigma$, we can ensure that the bias reduction dominates the variance increase.
For proof details, see Appendix \ref{append_mixup}.

\subsection{Compositions of Label-Invariant Transformations}\label{sec_compose}

Our theory can also be applied to quantify the amount of new information added by compositions of transformations. %
We first describe an example to show that taking compositions expands the search space of transformation functions.

{\bf Example.}
	Consider two transformations $F_1$ and $F_2$, e.g. a rotation and a horizontal flip.
	Suppose we are interested in finding a transformed sample $(x^{\aug}, y^{\aug})$ such that adding the sample reduces the estimation error of $\hat{\beta}^{\aug}$ the most.
	With additive compositions, the search space for $x^{\aug}$ becomes
	\[ \set{Fx : x \in \cX, F \in \set{F_1, F_2, F_1 + F_2}}, \]
	which is strictly a superset compared to using $F_1$ and $F_2$.

Based on Theorem \ref{thm_bias_orth}, we can derive a simple corollary which quantifies the incremental benefit of additively composing a new transformation.

\begin{corollary}\label{cor_compose}
	In the setting of Theorem \ref{thm_bias_orth}, let $F_1, F_2$ be two label-invariant transformations.
	For a data point $(x, y)$ from $(X, Y)$ and $i\in\set{1,2}$, let $z_i = P_X^{\perp}F_i x$ and $y_i^{\aug} = y - \diag{(X^{\top})^{\dagger}F_i x} Y$.
	The benefit of composing $F_2$ with $F_1$ is given as follows
	{\small\begin{align*}
		e(\hat{\beta}^{F_1}) - e(\hat{\beta}^{F_1 + F_2})
		=~& (2 + o(1))\frac{\inner{z_1}{\beta}^2 - \inner{z_1 + z_2}{\beta}^2}{\lambda n} \\
		&+ {\frac {\poly(\gamma/\lambda)} {n^2}}.
	\end{align*}}
\end{corollary}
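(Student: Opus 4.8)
The plan is to obtain the corollary as a direct consequence of Theorem~\ref{thm_bias_orth}, applied twice: once to the transformation $F_1$ and once to its additive composition $F_1+F_2$. The key observation is that the quantity controlling the error reduction in Theorem~\ref{thm_bias_orth} is the projected direction $z = P_X^{\perp} F x$, and that this projection is linear in $F$. Hence for the additive composition the relevant direction is
\[ P_X^{\perp}(F_1+F_2)x = P_X^{\perp}F_1 x + P_X^{\perp}F_2 x = z_1 + z_2, \]
while the associated label is $y^{\aug} = y - \diag{(X^{\top})^{\dagger}(F_1+F_2)x}\,Y$, which is exactly the label prescribed by the theorem for the composite transformation. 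Thus I can read off the benefit of each augmentation from the simplified bound \eqref{eq_thm_simple} with $z$ replaced by $z_1$ and by $z_1+z_2$ respectively.

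Concretely, I would first invoke \eqref{eq_thm_simple} for $F_1$ to get
\[ e(\hat{\beta}) - e(\hat{\beta}^{F_1}) = (2+o(1))\frac{\inner{z_1}{\beta}^2}{\lambda n} + O\Bigl(\frac{\poly(\gamma/\lambda)}{n^2}\Bigr), \]
and then invoke it again for $F_1+F_2$, substituting $z = z_1+z_2$, to get
\[ e(\hat{\beta}) - e(\hat{\beta}^{F_1+F_2}) = (2+o(1))\frac{\inner{z_1+z_2}{\beta}^2}{\lambda n} + O\Bigl(\frac{\poly(\gamma/\lambda)}{n^2}\Bigr). \]
Since both invocations start from the same base estimator $\hat{\beta}$ on $(X,Y)$, the term $e(\hat{\beta})$ is common to both. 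Subtracting the two displays cancels $e(\hat{\beta})$ and leaves $e(\hat{\beta}^{F_1}) - e(\hat{\beta}^{F_1+F_2})$ equal to $(2+o(1))\bigl(\inner{z_1}{\beta}^2-\inner{z_1+z_2}{\beta}^2\bigr)/(\lambda n)$ plus a remainder, which is the claimed identity; the sign of the numerator is fixed by the order of subtraction, and it records the \emph{incremental} signal that $F_2$ contributes on top of $F_1$.

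The main work is not in the subtraction but in checking that the hypotheses of Theorem~\ref{thm_bias_orth} hold simultaneously for both $z_1$ and $z_1+z_2$, so that \eqref{eq_thm_simple} may be applied in each case: one needs $\norm{z_1}^2 = o(n)$ and $\norm{z_1+z_2}^2 = o(n)$ together with the signal-to-noise conditions $\inner{z_i}{\beta}^2/\norm{z_i}^2 \gtrsim \log n \cdot (\cdots)\,\sigma^2/(\lambda n)$ for both directions. The one subtlety worth flagging is that $F_1+F_2$ need not be label-invariant in the literal sense of the definition, so rather than verifying label-invariance of the composite I would apply the theorem directly to the constructed augmented pair $(z_1+z_2, y^{\aug})$, whose conclusion depends on the transformation only through the projected direction and the prescribed label. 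Finally, I must argue that the two error terms combine cleanly: each remainder lies in $[0, \poly(\gamma/\lambda)/n^2]$, so their difference is $O(\poly(\gamma/\lambda)/n^2)$, and the two $(2+o(1))$ factors agree to leading order, so they may be pulled out as a single $(2+o(1))$ --- this bookkeeping of the lower-order terms is the only place where care is required.
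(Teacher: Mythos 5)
Your proposal matches the paper's own (very terse) proof: the paper likewise obtains the corollary by applying equation \eqref{eq_thm_simple} once with $z_1$ and once with $z_1+z_2$, then combining the two results so that $e(\hat{\beta})$ cancels. Your additional care about verifying the hypotheses for both directions and about applying the theorem to the constructed pair $(z_1+z_2, y^{\aug})$ rather than to the literal label-invariance of $F_1+F_2$ is a reasonable fleshing-out of details the paper omits, not a different route.
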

Corollary \ref{cor_compose} implies that the effect of composing $F_2$ with $F_1$ may either be better or worse than applying $F_1$.
This is consistent with our experimental observation (described in Section \ref{sec_aug}).
We defer the proof of Corollary \ref{cor_compose} to Appendix \ref{app_proof_compose}.
We remark that the example and the corollary also apply to multiplicative compositions.

\section{Measuring the Effects of Transformation Functions}\label{sec_aug}

We validate the theoretical insights from Section \ref{sec_analysis} on MNIST \cite{lecun1998gradient}.
To extend our results from the regression setting to the classification setting, we propose two metrics that correspond to the bias and the variance of a linear model.
Our idea is to decompose the average prediction accuracy (over all test samples) into two parts similar to equation \eqref{eq_bv}, including an {\it intrinsic error score} which is deterministic and an {\it instability score} which varies because of randomness.

We show three claims:
i) Label-invariant transformations such as rotations can add new information by reducing the intrinsic error score.
ii) As we increase the fraction of same-class mixup digits, the instability score decreases.
iii) Composing multiple transformations can either increase the accuracy (and intrinsic error score) or decrease it.
Further, we show how to select a core set of transformations.

\begin{figure*}[!t]
	\begin{minipage}[b]{0.38\textwidth}
	  \centering
	  \begin{tabular}{l c c c c}
		\toprule
        & Avg. 	& Error	& Instab.	\\
        & Acc.	& Score	& Score	\\
		\midrule
		Baseline    & 98.08\%   & 1.52\%    & 0.95\% \\
		Cutout		  &	98.31\%   & {1.43}\%		&	0.86\% \\
		RandCrop		&	98.61\% 	&	\textbf{1.01}\%		&	0.88\% \\
		Rotation		&	\textbf{98.65}\%  &  {1.08}\%		& \textbf{0.77}\% \\
		\bottomrule
	\end{tabular}
	\vspace{0.15in}
	\captionof{table}{Measuring the intrinsic error and instability scores of individual transformations on MNIST. The three transformations all reduce the intrinsic error score compared to the baseline.}\label{tab_decomposition}
	\end{minipage}%
	\hfill
	\begin{minipage}[b]{0.25\textwidth}
	  \includegraphics[width=0.98\textwidth]{./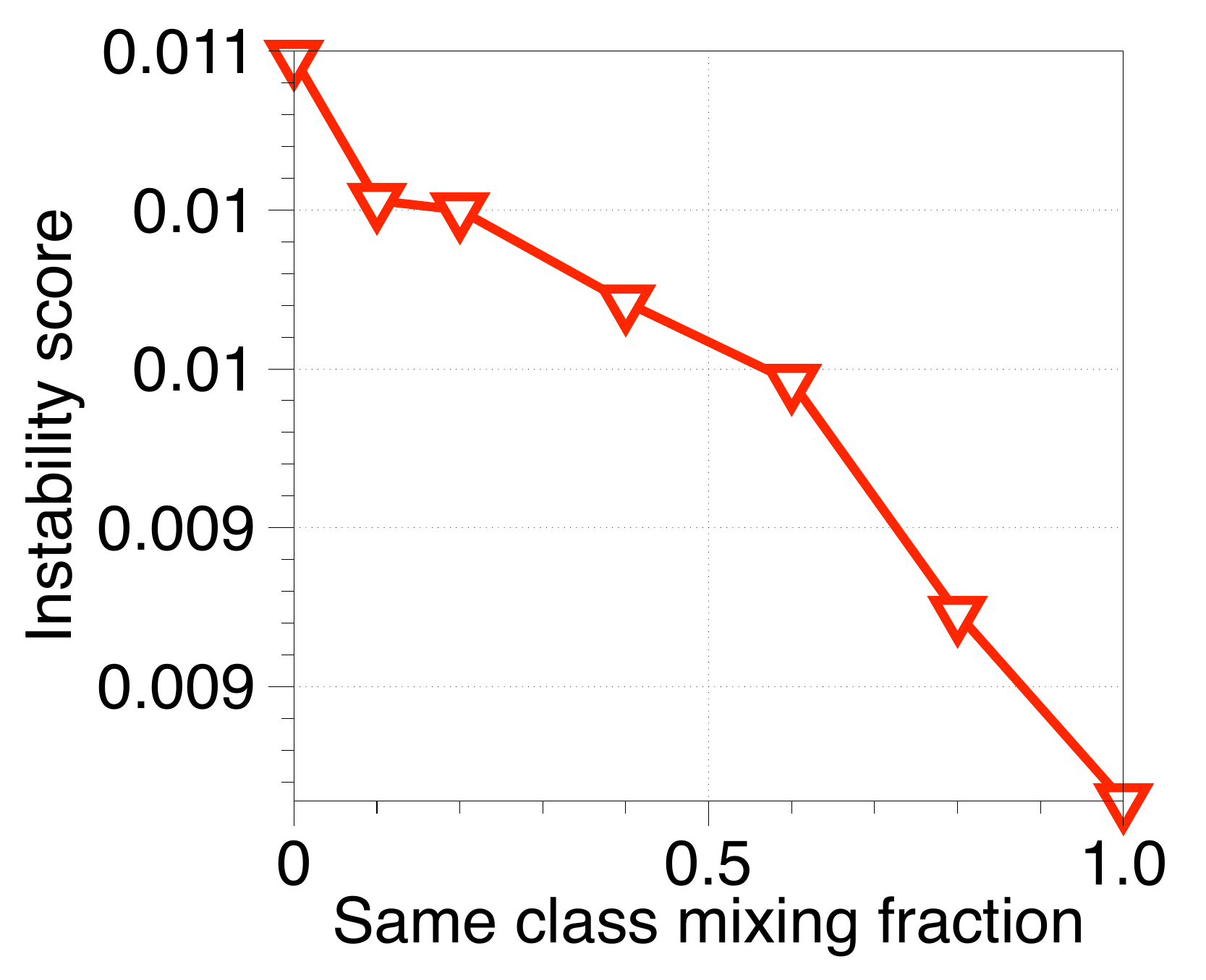}
	  \vspace{-0.0in}
		\caption{The instability score decreases as we increase the fraction of same-class mixup digits on MNIST.}
	  \label{fig_mnist_ins}
	\end{minipage}%
	\hfill
	\begin{minipage}[b]{0.33\textwidth}
	  \centering
	  \includegraphics[width=0.8\textwidth]{./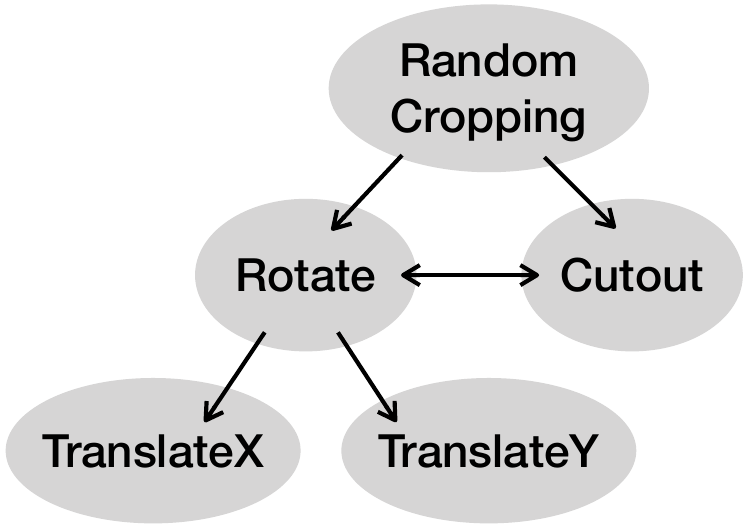}
	\vspace{0.05in}
		\caption{Visualizing which transformations are beneficial for each other. On MNIST, the translations do not provide additional benefit beyond the rest three transformations.}
		\label{fig_compose}
  \end{minipage}
\end{figure*}

{\bf Metrics.} We train $k$ independent samples of multi-layer perceptron with hidden layer dimension $100$.
Let $\hat{\beta}_i$ denote the predictor of the $i$-th sample, for $1\le i\le k$. %
For each data point $x$, let $M(x)$ denote the majority label in $\set{\hat{\beta}_i}_{i=1}^k$ (we break ties randomly).
Clearly, the majority label is the best estimator one could get, given the $k$ independent predictors, without extra information.
Then, we define the {\it intrinsic error score} as
\begin{align}
	\frac 1 n \cdot\bigbrace{ \sum_{(x, y) \in (X, Y)} \ind{M(x) \neq y}}. \label{eq_model_error}
\end{align}
We define the {\it instability score} as
\begin{align}
	\frac 1 {n} \cdot\bigbrace{ \sum_{(x, y) \in (X, Y)} \bigbrace{\frac 1 k \cdot \sum_{i=1}^k \ind{\hat{\beta}_i(x) \neq M(x)}}}. \label{eq_instability}
\end{align}
Compared to equation \eqref{eq_bv}, the first score corresponds to the bias of $\hat{\beta}$, which can change if we add new information to the training data.
The second score corresponds to the variance of $\hat{\beta}$, which measures the stability of the predicted outcomes in the classification setting.
For the experiments we sample $9$ random seeds, i.e. set $k=9$.

{\bf Label-invariant transformations.}
Recall that Section \ref{sec_individual} shows that label-invariant transformations can reduce the bias of a linear model by adding new information to the training data.
Correspondingly, our hypothesis is that the intrinsic error score should decrease after augmenting the training dataset with suitable transformations.

Table \ref{tab_decomposition} shows the result for applying rotation, random cropping and the cutout (i.e. cutting out a piece of the image) respectively.
The baseline corresponds to using no transformations.
We observe that the three transformations all reduce the model error score and the instability score, which confirms our hypothesis.
There are also two second-order effects:
 i) A higher instability score hurts the average predication accuracy of random cropping compared to rotation. ii) A higher intrinsic error score hurts the average prediction accuracy of cutout compared to random cropping.

To further validate that rotated data ``adds new information'', we show that the intrinsic error score decreases by correcting data points that are ``mostly incorrectly'' predicted by the baseline model.
Among the data points that are predicted correctly after applying the rotation but are wrong on the baseline model, the average accuracy of these data points on the baseline model is $25\%$.
In other words, for $75\%$ of the time, the baseline model makes the wrong prediction for these data.
We leave the details to Appendix \ref{app_metric}.

{\bf Label-mixing transformations.}
Section \ref{sec_mixup} shows that in our regression setting, adding a mixup sample has a regularization effect.
What does it imply for classification problems?
We note that our regression setting comprises a single parameter class $\beta$.
For MNIST, there are 10 different classes of digits.
Therefore, a plausible hypothesis following our result is that mixing same-class digits has a regularization effect.

We conduct the following experiment to verify the hypothesis.
We vary the fraction of mixup data from mixing same-class vs. different-class digits.
We expect that as we increase the same class mixing fraction, the instability score decreases.
The results of figure \ref{fig_mnist_ins} confirm our hypothesis.
We also observe that mixing different class images behaves differently compared to mixing same class images.
The details are left to Appendix \ref{app_mixup}.

{\bf Compositions of transformations.} Recall that Section \ref{sec_compose} shows the effect of composing two transformations may be positive or negative.
We describe examples to confirm the claim.
By composing rotation with random cropping, we observe an accuracy of $98.72\%$ with intrinsic error score $0.86\%$, which are both lower than rotation and random cropping individually.
A negative example is that composing translating the X and Y coordinate (acc. $98.0\%$) is worse than translating X (acc. $98.38\%$) or Y (acc. $98.19\%$) individually. Same for the other two scores.

{\it Identifying core transformations.} As an in-depth study, we select five transformation including rotation, random cropping, cutout, translation of the x-axis and the y-axis.
These transformations can all change the geometric position of the digits in the image.
Our goal is to identify a subset of core transformations from the five transformations.
For example, since the rotation transformation implicitly changes the x and the y axis of the image, a natural question is whether composing translations with rotation helps or not.

To visualize the results, we construct a directed graph with the five transformations.
We compose two transformations to measure their model error scores.
An edge from transformation A to transformation B means that applying B after A reduces the model error score by over $10\%$.
Figure \ref{fig_compose} shows the results.
We observe that rotation, random cropping and cutout can all help some other transformations, whereas the translations do not provide an additional benefit for the rest.

\section{Experiments}
\label{sec_exp}

{We test our uncertainty-based transformation sampling scheme on both image and text classification tasks.}
First, our sampling scheme achieves more accurate results by finding more useful transformations compared to RandAugment.
Second, we achieve comparable test accuracy to the SoTA Adversarial AutoAugment on CIFAR-10, CIFAR-100, and ImageNet with less training cost because our method is conceptually simpler.
Finally, we evaluate our scheme in text augmentations to help train a sentiment analysis model.
The code repository for our experiments can be found at https://github.com/SenWu/dauphin.

\subsection{Experimental Setup}\label{sec_exp_setup}

\begin{table*}[h]
\begin{center}
\small
\vspace{0.1cm}
\begin{tabular}{c c | c c c c c c | c}
\toprule
Dataset & Model & Baseline & AA & Fast AA & PBA & RA & Adv. AA & Ours \\
\midrule
\multirow{3}{*}{CIFAR-10}
& Wide-ResNet-28-10      & 96.13 & 97.32 & 97.30 & 97.42 & 97.30 & \textbf{98.10} & 97.89\%($\pm$0.03\%) \\
& Shake-Shake (26 2x96d) & 97.14 & 98.01 & 98.00 & 97.97 & 98.00 & 98.15 & \textbf{98.27\%($\pm$0.05\%)} \\
& PyramidNet+ShakeDrop   & 97.33 & 98.52 & 98.30 & 98.54 & 98.50 & 98.64 & \textbf{98.66\%($\pm$0.02\%)} \\
\midrule
\multirow{1}{*}{CIFAR-100}
& Wide-ResNet-28-10      & 81.20 & 82.91 & 82.70 & 83.27 & 83.30 & 84.51 & \textbf{84.54\%($\pm$0.09\%)} \\
\midrule
SVHN & Wide-ResNet-28-10      & 96.9 & 98.1 & - & - & \textbf{98.3} & - & \textbf{98.3\%($\pm$0.03\%)} \\
\midrule
ImageNet & ResNet-50          & 76.31 & 77.63 & 77.60 & - & 77.60 & \textbf{79.40} & 79.14\% \\
\bottomrule
\end{tabular}
\end{center}
\caption{Test accuracy (\%) on CIFAR-10, CIFAR-100, and SVHN. We compare our method with default data augmentation (Baseline), AutoAugment (AA), Fast AutoAugment (Fast AA), Population Based Augmentation (PBA), RandAugment (RA), and Adversarial AutoAugment (Adv. AA). Our results are averaged over four random seeds except ImageNet experiments.}
\label{exp:image_end2end}
\end{table*}

{\bf Datasets and models.} We consider the following datasets and models in our experiments.

\textit{CIFAR-10 and CIFAR-100}:
The two datasets are colored images with 10 and 100 classes, respectively.
We evaluate our proposed method for classifying images using the following models : Wide-ResNet-28-10~\cite{zagoruyko2016wide}, Shake-Shake (26 2x96d)~\cite{gastaldi2017shake}, and PyramidNet+ShakeDrop~\cite{han2017deep,yamada2018shakedrop}.

{\it Street view house numbers (SVHN)}: This dataset contains color house-number images with 73,257 core images for training and 26,032 digits for testing. We use Wide-ResNet-28-10 model for classifying these images. %

{\it ImageNet Large-Scale Visual Recognition Challenge (ImageNet)}: This dataset includes images of 1000 classes, and has a training set with roughly 1.3M images, and a validation set with 50,000 images. We select ResNet-50~\cite{he2016deep} to evaluate our method.

\vspace{0.042in}
{\bf Comparison methods.} For image classification tasks, we compare Algorithm \ref{alg_unc} with AutoAugment (AA)~\cite{cubuk2018autoaugment}, Fast AutoAugment (Fast AA)~\cite{fast_autoaug}, Population Based Augmentation (PBA)~\cite{ho2019population}, RandAugment (RA)~\cite{cubuk2019randaugment}, and Adversarial AutoAugment (Adv. AA)~\cite{adversarial_autoaugment}.
The baseline model includes the following transformations.
For CIFAR-10 and CIFAR-100, we flip each image horizontally with probability $0.5$ and then randomly crop a $32 \times 32$ sub-image from the padded image.
For SVHN, we apply the cutout to every image. %
For ImageNet, we randomly resize and crop a $224 \times 224$ sub-image from the original image and then flip the image horizontally with probability $0.5$.

\vspace{0.042in}
{\bf Training procedures.} Recall that Algorithm \ref{alg_unc} contains three parameters, how many composition steps ($L$) we take, how many augmented data points ($C$) we generate and how many ($S$) we select for training.
We set $L=2$, $C=4$ and $S=1$ for our experiments on CIFAR datasets and SVHN.
We set $L=2$, $C=8$ and $S=4$ for our experiments on ImageNet.
We consider $K=16$ transformations in Algorithm \ref{alg_unc}, including AutoContrast, Brightness, Color, Contrast, Cutout, Equalize, Invert, Mixup, Posterize, Rotate, Sharpness, ShearX, ShearY, Solarize, TranslateX, TranslateY.
See e.g. \cite{SK19} for descriptions of these transformations.

As is common in previous work, we also include a parameter to set the probability of applying a transformation in Line 4 of Algorithm \ref{alg_unc}.
We set this parameter and the magnitude of each transformation randomly in a suitable range.
We apply the augmentations over the entire training dataset.
We report the results averaged over four random seeds.

\subsection{Experimental Results}

We apply Algorithm \ref{alg_unc} on three image classification tasks (CIFAR10, CIFAR100, SVHN, and ImageNet) over several models.
Table~\ref{exp:image_end2end} summarizes the result. %
We highlight the comparisons to RandAugment and Adversarial AutoAugment since they dominate the other benchmark methods.

{\bf Improving classification accuracy over RandAugment.}
For Wide-ResNet-28-10, we find that our method outperforms RandAugment by 0.59\% on CIFAR-10 and 1.24\% on CIFAR-100.
If we do not use Mixup, our method still outperforms RandAugment by 0.45\% on CIFAR-10.
For Shake-Shake and PyramidNet+ShakeDrop, our method improves the accuracy of RandAugment by 0.27\% and 0.16\% on CIFAR-10, respectively.
For ResNet-50 on ImageNet dataset, our method achieves top-1 accuracy 79.14\% which outperforms RandAugment by 1.54\%.

{\bf Improving training efficiency over Adversarial AutoAugment.}
Our method achieves comparable accuracy to the current state-of-the-art on CIFAR-10, CIFAR-100, and ImageNet.
Algorithm \ref{alg_unc} uses additional inference cost to find the uncertain samples.
And we estimate that the additional cost equals half of the training cost.
{However, the inference cost is 5x cheaper compared to training the adversarial network of Adversarial AutoAugment, which requires generating 8 times more samples for training.}

{\bf Further improvement by increasing the number of augmented samples.}
Recall that Algorithm \ref{alg_unc} contains a parameter $S$ which controls how many new labeled data we generate per training data.
The results in Table \ref{exp:image_end2end} use $C=4$ and $S=1$, but we can further boost the prediction accuracy by increasing $C$ and $S$.
In Table~\ref{exp:image_end2end_2}, we find that by setting $C=8$ and $S=4$, our method improves the accuracy of Adversarial AutoAugment on CIFAR-100 by 0.49\% on Wide-ResNet-28-10. %

\subsection{Ablation Studies}\label{sec_abl}

\begin{figure*}[!t]
	\begin{minipage}[b]{0.45\textwidth}
		\centering
		\begin{tabular}{c | c c}
			\toprule
				Dataset & Adv. AA & Ours ($S=4$) \\
			\midrule
				CIFAR-10  & 98.10($\pm0.15\%$) & \textbf{98.16\%($\pm$0.05\%)} \\
				CIFAR-100 & 84.51($\pm0.18\%$) & \textbf{85.02\%($\pm$0.18\%)} \\
			\bottomrule
		\end{tabular}
		\caption{Increasing the number of augmented data points per training sample can further improve accuracy.}
		\label{exp:image_end2end_2}%
	\end{minipage}%
	\hfill
	\begin{minipage}[b]{0.45\textwidth}
		\centering
		\begin{tabular}{c | c c c}
			\toprule
				&	RA  & Adv. AA &  Ours ($S=1$) \\
			\midrule
				Training ($\times$)  & 1.0 & 8.0 & $\sim1.5$ \\
			\bottomrule
		\end{tabular}
		\caption{Comparing the training cost between our method, RandAugment and Adversarial AutoAugment on CIFAR-10 relative to RandAugment. The training cost of Adversarial AutoAugment is cited from the authors \cite{adversarial_autoaugment}.}
		\label{exp:cost_end2end}
	\end{minipage}
\end{figure*}

{\bf Histgram of selected transformations.}
We examine the transformations selected by Algorithm \ref{alg_unc} to better understand its difference compared to RandAugment.
For this purpose, we measure the frequency of transformations selected by Algorithm \ref{alg_unc} every 100 epoch.
If a transformation generates useful samples, then the model should learn from these samples.
And the loss of these transformed data will decrease as a result.
On the other hand, if a transformation generates bad samples that are difficult to learn, then the loss of these bad samples will remain large.

Figure~\ref{fig:trans_shift} shows the sampling frequency of five compositions.
We test the five compositions on a vanilla Wide-ResNet model.
For transformations whose frequencies are decreasing, we get:
	Rotate and ShearY, 93.41\%;
	TranslateY and Cutout, 93.88\%.
For transformations whose frequencies are increasing, we get:
	Posterize and Color, 89.12\% (the results for other two are similar and we omit the details). %
Hence the results confirm that Algorithm \ref{alg_unc} learns and reduces the frequencies of the better performing transformations.

\subsection{Extension to Text Augmentations}

While we have focused on image augmentations throughout the paper, we can also our ideas to text augmentations.
We extend Algorithm \ref{alg_unc} to a sentiment analysis task as follows.

We choose \bertlarge as the baseline model, which is a 24 layer transformer network from \cite{BERT18}.
We apply our method to three augmentations: back-translation~\cite{yu2018qanet}, switchout~\cite{wang2018switchout}, and word replace~\cite{xie2019unsupervised}.

\vspace{0.07in}
{\it Dataset}.  We use the Internet movie database (IMDb) with 50,000 movie reviews.
The goal is to predict whether the sentiment of the review is positive or negative.

{\it Comparison methods}. We compare with pre-BERT SoTA, \bertlarge, and unsupervised data augmentation (UDA)~\cite{xie2019unsupervised}.
UDA uses \bertlarge initialization and training on 20 supervised examples and DBPedia~\cite{lehmann2015dbpedia} as an unsupervised source.

{\it Results.} We find that our method achieves test accuracy 95.96\%, which outperforms all the other methods by at least 0.28\%.
For reference, the result of using Pre-BERT SoTA is 95.68\%.
The result of using \bertlarge is 95.22\%.
The result of using UDA is 95.22\%.

\section{Related Work}\label{sec_related}

{\bf Image augmentations.}
We describe a brief summary and refer interested readers to the excellent survey by \cite{SK19} for complete references.

Data augmentation has become a standard practice in computer vision such as image classification tasks.
First, individual transformations such as horizontal flip and mixup have shown improvement over strong vanilla models.
Beyond individual transformations, one approach to search for compositions of transformations is to train generative adversarial networks to generate new images as a form of data augmentation \cite{SWL18,LA16,O16,GSZZC18}.
Another approach is to use reinforcement learning based search methods \cite{HTSMX19}.
The work of \citet{cubuk2018autoaugment} searches for the augmentation schemes on a small surrogate dataset.
While this idea reduces the search cost, it was shown that using a small surrogate dataset results in sub-optimal augmentation policies \cite{cubuk2019randaugment}.

The work of \citet{KS18} is closely related to ours since they also experiment with the idea of uncertainty-based sampling.
Their goal is different from our work in that they use this idea to find a representative sub-sample of the training dataset that can still preserve the performance of applying augmentation policies.
Recently, the idea of mixing data has been applied to semi-supervised learning by mixing feature representations as opposed to the input data \cite{BCGPOR19}.

{\bf Theoretical studies}. \citet{DGRSDR19} propose a kernel theory to show that label-invariant augmentations are equivalent to transforming the kernel matrix in a way that incorporates the prior of the transformation.
\citet{CDL19} use group theory to show that incorporating the label-invariant property into an empirical risk minimization framework reduces variance.

Our theoretical setup is related to \citet{adversarial_aug20,RXYDL20}, with several major differences.
First, in our setting, we assume that the label of an augmented data is generated from the training data, which is deterministic.
In their setting, the label of an augmented data includes new information because the random noise is freshly drawn.
Second, we consider the ridge estimator as opposed to the minimum norm estimator, since the ridge estimator includes an $\ell_2$ regularization which is commonly used in practice.
Finally, it would be interesting to extend our theoretical setup beyond linear settings (e.g. \citet{LMZ18,ZSCL19,MRSY19}).

\section{Conclusions and Future Work}

In this work, we studied the theory of data augmentation in a simplified over-parametrized linear setting that captures the need to add more labeled data as in image settings, where there are more parameters than the number of data points. Despite the simplicity of the setting, we have shown three novel insights into three categories of transformations.
We verified our theoretical insights on MNIST. And we proposed an uncertainty-based sampling scheme which outperforms random sampling.
We hope that our work can spur more interest in developing a better understanding of data augmentation methods.
Below, we outline several questions that our theory cannot yet explain.

First, one interesting future direction is to further uncover the mysterious role of mixup (Zhang et al.’17).
Our work has taken the first step by showing the connection between mixup and regularization.
Meanwhile, Table \ref{tab_same_class_mixup} (in Appendix) shows that mixup can reduce the model error score (bias) on CIFAR-10.
Our theory does not explain this phenomenon because our setup implicit assumes a single class (the linear model $\beta$) for all data.
We believe that extending our work to a more sophisticated setting, e.g. mixed linear regression model, is an interesting direction to explain the working of mixup augmentation over multiple classes.

Second, it would be interesting to consider the theoretical benefit of our proposed uncertainty-based sampling algorithm compared to random sampling, which can help tighten the connection between our theory and the proposed algorithm.
We would like to remark that addressing this question likely requires extending the models and tools that we have developed in this work. Specifically, one challenge is how to come up with a data model that will satisfy the label-invariance property for a large family of linear transformations.
We leave these questions for future work.

\section*{Acknowledgements}

We gratefully acknowledge the support of DARPA under Nos. FA86501827865 (SDH) and FA86501827882 (ASED); NIH under No. U54EB020405 (Mobilize), NSF under Nos. CCF1763315 (Beyond Sparsity), CCF1563078 (Volume to Velocity), and 1937301 (RTML); ONR under No. N000141712266 (Unifying Weak Supervision); the Moore Foundation, NXP, Xilinx, LETI-CEA, Intel, IBM, Microsoft, NEC, Toshiba, TSMC, ARM, Hitachi, BASF, Accenture, Ericsson, Qualcomm, Analog Devices, the Okawa Foundation, American Family Insurance, Google Cloud, Swiss Re, the HAI-AWS Cloud Credits for Research program, and members of the Stanford DAWN project: Teradata, Facebook, Google, Ant Financial, NEC, VMWare, and Infosys.
The U.S. Government is authorized to reproduce and distribute reprints for Governmental purposes notwithstanding any copyright notation thereon. Any opinions, findings, and conclusions or recommendations expressed in this material are those of the authors and do not necessarily reflect the views, policies, or endorsements, either expressed or implied, of DARPA, NIH, ONR, or the U.S. Government.

Gregory Valiant's contributions were supported by NSF awards 1804222, 1813049 and 1704417, DOE  award DE-SC0019205 and an ONR Young Investigator Award.

Our experiments are partly run on Stanford University's SOAL cluster hosted in the Department of Management Science and Engineering.

In the previous version of this paper, we should have stated the adjustment of $\lambda$ before/after mixup in Theorem \ref{thm_random_mixup}, which has led to some confusion about this result.
We sincerely thank Kai Zhong (Amazon) for bringing this issue to the author's attention. We have now emphasized this in Section \ref{sec_mixup}.
Thanks also to Hansi Yang (HKUST) for raising several issues in the proof of Theorem \ref{thm_random_mixup}.
These have been fixed in the present paper.

\balance
\bibliographystyle{icml2020}
\bibliography{rf}

\begin{thebibliography}{43}
\providecommand{\natexlab}[1]{#1}
\providecommand{\url}[1]{\texttt{#1}}
\expandafter\ifx\csname urlstyle\endcsname\relax
  \providecommand{\doi}[1]{doi: #1}\else
  \providecommand{\doi}{doi: \begingroup \urlstyle{rm}\Url}\fi

\bibitem[Bartlett et~al.(2019)Bartlett, Long, Lugosi, and Tsigler]{BLLT19}
Bartlett, P.~L., Long, P.~M., Lugosi, G., and Tsigler, A.
\newblock Benign overfitting in linear regression.
\newblock \emph{arXiv preprint arXiv:1906.11300}, 2019.

\bibitem[Berthelot et~al.(2019)Berthelot, Carlini, Goodfellow, Papernot,
  Oliver, and Raffel]{BCGPOR19}
Berthelot, D., Carlini, N., Goodfellow, I., Papernot, N., Oliver, A., and
  Raffel, C.~A.
\newblock Mixmatch: A holistic approach to semi-supervised learning.
\newblock In \emph{Advances in Neural Information Processing Systems}, pp.\
  5050--5060, 2019.

\bibitem[Chen et~al.(2019)Chen, Dobriban, and Lee]{CDL19}
Chen, S., Dobriban, E., and Lee, J.~H.
\newblock Invariance reduces variance: Understanding data augmentation in deep
  learning and beyond.
\newblock \emph{arXiv preprint arXiv:1907.10905}, 2019.

\bibitem[Cubuk et~al.(2018)Cubuk, Zoph, Mane, Vasudevan, and
  Le]{cubuk2018autoaugment}
Cubuk, E.~D., Zoph, B., Mane, D., Vasudevan, V., and Le, Q.~V.
\newblock Autoaugment: Learning augmentation policies from data.
\newblock \emph{arXiv preprint arXiv:1805.09501}, 2018.

\bibitem[Cubuk et~al.(2019)Cubuk, Zoph, Shlens, and Le]{cubuk2019randaugment}
Cubuk, E.~D., Zoph, B., Shlens, J., and Le, Q.~V.
\newblock Randaugment: Practical data augmentation with no separate search.
\newblock \emph{arXiv preprint arXiv:1909.13719}, 2019.

\bibitem[Dao et~al.(2019)Dao, Gu, Ratner, Smith, De~Sa, and R{\'e}]{DGRSDR19}
Dao, T., Gu, A., Ratner, A.~J., Smith, V., De~Sa, C., and R{\'e}, C.
\newblock A kernel theory of modern data augmentation.
\newblock \emph{Proceedings of machine learning research}, 97:\penalty0 1528,
  2019.

\bibitem[Devlin et~al.(2018)Devlin, Chang, Lee, and Toutanova]{BERT18}
Devlin, J., Chang, M.-W., Lee, K., and Toutanova, K.
\newblock Bert: Pre-training of deep bidirectional transformers for language
  understanding.
\newblock \emph{arXiv preprint arXiv:1810.04805}, 2018.

\bibitem[Friedman et~al.(2001)Friedman, Hastie, and Tibshirani]{FHT01}
Friedman, J., Hastie, T., and Tibshirani, R.
\newblock \emph{The elements of statistical learning}, volume~1.
\newblock Springer series in statistics New York, 2001.

\bibitem[Gao et~al.(2018)Gao, Shou, Zareian, Zhang, and Chang]{GSZZC18}
Gao, H., Shou, Z., Zareian, A., Zhang, H., and Chang, S.-F.
\newblock Low-shot learning via covariance-preserving adversarial augmentation
  networks.
\newblock In \emph{Advances in Neural Information Processing Systems}, pp.\
  975--985, 2018.

\bibitem[Gastaldi(2017)]{gastaldi2017shake}
Gastaldi, X.
\newblock Shake-shake regularization.
\newblock \emph{arXiv preprint arXiv:1705.07485}, 2017.

\bibitem[Han et~al.(2017)Han, Kim, and Kim]{han2017deep}
Han, D., Kim, J., and Kim, J.
\newblock Deep pyramidal residual networks.
\newblock In \emph{Proceedings of the IEEE conference on computer vision and
  pattern recognition}, pp.\  5927--5935, 2017.

\bibitem[Hastie et~al.(2009)Hastie, Tibshirani, and Friedman]{HTF09}
Hastie, T., Tibshirani, R., and Friedman, J.
\newblock \emph{The elements of statistical learning: data mining, inference,
  and prediction}.
\newblock Springer Science \& Business Media, 2009.

\bibitem[Hastie et~al.(2019)Hastie, Montanari, Rosset, and Tibshirani]{HMRT19}
Hastie, T., Montanari, A., Rosset, S., and Tibshirani, R.~J.
\newblock Surprises in high-dimensional ridgeless least squares interpolation.
\newblock \emph{arXiv preprint arXiv:1903.08560}, 2019.

\bibitem[Hataya et~al.(2019)Hataya, Zdenek, Yoshizoe, and
  Nakayama]{faster_autoaug}
Hataya, R., Zdenek, J., Yoshizoe, K., and Nakayama, H.
\newblock Faster autoaugment: Learning augmentation strategies using
  backpropagation.
\newblock \emph{arXiv preprint arXiv:1911.06987}, 2019.

\bibitem[He et~al.(2016)He, Zhang, Ren, and Sun]{he2016deep}
He, K., Zhang, X., Ren, S., and Sun, J.
\newblock Deep residual learning for image recognition.
\newblock In \emph{Proceedings of the IEEE conference on computer vision and
  pattern recognition}, pp.\  770--778, 2016.

\bibitem[Ho et~al.(2019)Ho, Liang, Stoica, Abbeel, and Chen]{ho2019population}
Ho, D., Liang, E., Stoica, I., Abbeel, P., and Chen, X.
\newblock Population based augmentation: Efficient learning of augmentation
  policy schedules.
\newblock \emph{arXiv preprint arXiv:1905.05393}, 2019.

\bibitem[Hu et~al.(2019)Hu, Tan, Salakhutdinov, Mitchell, and Xing]{HTSMX19}
Hu, Z., Tan, B., Salakhutdinov, R.~R., Mitchell, T.~M., and Xing, E.~P.
\newblock Learning data manipulation for augmentation and weighting.
\newblock In \emph{Advances in Neural Information Processing Systems}, pp.\
  15738--15749, 2019.

\bibitem[Inoue(2018)]{I18}
Inoue, H.
\newblock Data augmentation by pairing samples for images classification.
\newblock \emph{arXiv preprint arXiv:1801.02929}, 2018.

\bibitem[Krizhevsky et~al.(2009)Krizhevsky, Hinton,
  et~al.]{krizhevsky2009learning}
Krizhevsky, A., Hinton, G., et~al.
\newblock Learning multiple layers of features from tiny images.
\newblock 2009.

\bibitem[Kuchnik \& Smith(2018)Kuchnik and Smith]{KS18}
Kuchnik, M. and Smith, V.
\newblock Efficient augmentation via data subsampling.
\newblock \emph{arXiv preprint arXiv:1810.05222}, 2018.

\bibitem[Laine \& Aila(2016)Laine and Aila]{LA16}
Laine, S. and Aila, T.
\newblock Temporal ensembling for semi-supervised learning.
\newblock \emph{arXiv preprint arXiv:1610.02242}, 2016.

\bibitem[LeCun et~al.(1998)LeCun, Bottou, Bengio, and
  Haffner]{lecun1998gradient}
LeCun, Y., Bottou, L., Bengio, Y., and Haffner, P.
\newblock Gradient-based learning applied to document recognition.
\newblock \emph{Proceedings of the IEEE}, 86\penalty0 (11):\penalty0
  2278--2324, 1998.

\bibitem[Lehmann et~al.(2015)Lehmann, Isele, Jakob, Jentzsch, Kontokostas,
  Mendes, Hellmann, Morsey, Van~Kleef, Auer, et~al.]{lehmann2015dbpedia}
Lehmann, J., Isele, R., Jakob, M., Jentzsch, A., Kontokostas, D., Mendes,
  P.~N., Hellmann, S., Morsey, M., Van~Kleef, P., Auer, S., et~al.
\newblock Dbpedia--a large-scale, multilingual knowledge base extracted from
  wikipedia.
\newblock \emph{Semantic Web}, 6\penalty0 (2):\penalty0 167--195, 2015.

\bibitem[Li et~al.(2018)Li, Ma, and Zhang]{LMZ18}
Li, Y., Ma, T., and Zhang, H.
\newblock Algorithmic regularization in over-parameterized matrix sensing and
  neural networks with quadratic activations.
\newblock In \emph{Conference On Learning Theory}, pp.\  2--47, 2018.

\bibitem[Lim et~al.(2019)Lim, Kim, Kim, Kim, and Kim]{fast_autoaug}
Lim, S., Kim, I., Kim, T., Kim, C., and Kim, S.
\newblock Fast autoaugment.
\newblock In \emph{Advances in Neural Information Processing Systems}, pp.\
  6665--6675, 2019.

\bibitem[Maas et~al.(2011)Maas, Daly, Pham, Huang, Ng, and Potts]{maas-EtAl}
Maas, A.~L., Daly, R.~E., Pham, P.~T., Huang, D., Ng, A.~Y., and Potts, C.
\newblock Learning word vectors for sentiment analysis.
\newblock In \emph{Proceedings of the 49th Annual Meeting of the Association
  for Computational Linguistics: Human Language Technologies}, 2011.

\bibitem[Montanari et~al.(2019)Montanari, Ruan, Sohn, and Yan]{MRSY19}
Montanari, A., Ruan, F., Sohn, Y., and Yan, J.
\newblock The generalization error of max-margin linear classifiers:
  High-dimensional asymptotics in the overparametrized regime.
\newblock \emph{arXiv preprint arXiv:1911.01544}, 2019.

\bibitem[Odena(2016)]{O16}
Odena, A.
\newblock Semi-supervised learning with generative adversarial networks.
\newblock \emph{arXiv preprint arXiv:1606.01583}, 2016.

\bibitem[Raghunathan et~al.(2020)Raghunathan, Xie, Yang, Duchi, and
  Liang]{RXYDL20}
Raghunathan, A., Xie, S.~M., Yang, F., Duchi, J., and Liang, P.
\newblock Understanding and mitigating the tradeoff between robustness and
  accuracy.
\newblock \emph{arXiv preprint arXiv:2002.10716}, 2020.

\bibitem[Rajput et~al.(2019)Rajput, Feng, Charles, Loh, and
  Papailiopoulos]{RFCLP19}
Rajput, S., Feng, Z., Charles, Z., Loh, P.-L., and Papailiopoulos, D.
\newblock Does data augmentation lead to positive margin?
\newblock \emph{arXiv preprint arXiv:1905.03177}, 2019.

\bibitem[Ratner et~al.(2017)Ratner, Ehrenberg, Hussain, Dunnmon, and
  R{\'e}]{REHDR17}
Ratner, A.~J., Ehrenberg, H., Hussain, Z., Dunnmon, J., and R{\'e}, C.
\newblock Learning to compose domain-specific transformations for data
  augmentation.
\newblock In \emph{Advances in neural information processing systems}, pp.\
  3236--3246, 2017.

\bibitem[Riedel(1992)]{riedel92}
Riedel, K.~S.
\newblock A sherman--morrison--woodbury identity for rank augmenting matrices
  with application to centering.
\newblock \emph{SIAM Journal on Matrix Analysis and Applications}, 13\penalty0
  (2):\penalty0 659--662, 1992.

\bibitem[Shorten \& Khoshgoftaar(2019)Shorten and Khoshgoftaar]{SK19}
Shorten, C. and Khoshgoftaar, T.~M.
\newblock A survey on image data augmentation for deep learning.
\newblock \emph{Journal of Big Data}, 6\penalty0 (1):\penalty0 60, 2019.

\bibitem[Sixt et~al.(2018)Sixt, Wild, and Landgraf]{SWL18}
Sixt, L., Wild, B., and Landgraf, T.
\newblock Rendergan: Generating realistic labeled data.
\newblock \emph{Frontiers in Robotics and AI}, 5:\penalty0 66, 2018.

\bibitem[Wang et~al.(2018)Wang, Pham, Dai, and Neubig]{wang2018switchout}
Wang, X., Pham, H., Dai, Z., and Neubig, G.
\newblock Switchout: an efficient data augmentation algorithm for neural
  machine translation.
\newblock \emph{arXiv preprint arXiv:1808.07512}, 2018.

\bibitem[Xie et~al.(2019)Xie, Dai, Hovy, Luong, and Le]{xie2019unsupervised}
Xie, Q., Dai, Z., Hovy, E., Luong, M.-T., and Le, Q.~V.
\newblock Unsupervised data augmentation for consistency training.
\newblock \emph{arXiv preprint arXiv:1904.12848}, 2019.

\bibitem[Xie et~al.(2020)Xie, Raghunathan, Yang, Duchi, and
  Liang]{adversarial_aug20}
Xie, S.~M., Raghunathan, A., Yang, F., Duchi, J.~C., and Liang, P.
\newblock When covariate-shifted data augmentation increases test error and how
  to fix it, 2020.

\bibitem[Yamada et~al.(2018)Yamada, Iwamura, Akiba, and
  Kise]{yamada2018shakedrop}
Yamada, Y., Iwamura, M., Akiba, T., and Kise, K.
\newblock Shakedrop regularization for deep residual learning.
\newblock \emph{arXiv preprint arXiv:1802.02375}, 2018.

\bibitem[Yu et~al.(2018)Yu, Dohan, Luong, Zhao, Chen, Norouzi, and
  Le]{yu2018qanet}
Yu, A.~W., Dohan, D., Luong, M.-T., Zhao, R., Chen, K., Norouzi, M., and Le,
  Q.~V.
\newblock Qanet: Combining local convolution with global self-attention for
  reading comprehension.
\newblock \emph{arXiv preprint arXiv:1804.09541}, 2018.

\bibitem[Zagoruyko \& Komodakis(2016)Zagoruyko and
  Komodakis]{zagoruyko2016wide}
Zagoruyko, S. and Komodakis, N.
\newblock Wide residual networks.
\newblock \emph{arXiv preprint arXiv:1605.07146}, 2016.

\bibitem[Zhang et~al.(2017)Zhang, Cisse, Dauphin, and Lopez-Paz]{ZCDL17}
Zhang, H., Cisse, M., Dauphin, Y.~N., and Lopez-Paz, D.
\newblock mixup: Beyond empirical risk minimization.
\newblock \emph{arXiv preprint arXiv:1710.09412}, 2017.

\bibitem[Zhang et~al.(2019)Zhang, Sharan, Charikar, and Liang]{ZSCL19}
Zhang, H., Sharan, V., Charikar, M., and Liang, Y.
\newblock Recovery guarantees for quadratic tensors with limited observations.
\newblock In \emph{International Conference on Artificial Intelligence and
  Statistics (AISTATS)}, 2019.

\bibitem[Zhang et~al.(2020)Zhang, Wang, Zhang, and
  Zhong]{adversarial_autoaugment}
Zhang, X., Wang, Q., Zhang, J., and Zhong, Z.
\newblock Adversarial autoaugment.
\newblock In \emph{International Conference on Learning Representations}, 2020.

\end{thebibliography}

\appendix
\onecolumn
\paragraph{Organization of the Appendix.}
In Appendix \ref{app_proofs_sec3}, we fill in the proofs for our main results in Section \ref{sec_analysis}.
We also discuss extensions of our results to augmenting multiple data points and interesting future directions.
In Appendix \ref{app_metric}, we complement Section \ref{sec_aug} with a full list of evaluations on label-invariant transformations.
In Appendix \ref{app_exp}, we describe the implementation and the experimental procedures in more detail.
In Appendix \ref{app_mixup}, we provide further experimental results on why mixup helps improve performance in image classification tasks.

\section{Supplementary Materials for Section \ref{sec_analysis}}\label{app_proofs_sec3}

{\bf Notations.} For a matrix $X\in\real^{d_1\times d_2}$, we use $\mu_{\max}(X)$ to denote its largest singular value.
Let $X_{i, j}$ denote its $(i, j)$-th entry, for $1\le i\le d_1$ and $1\le j \le d_2$.
For a diagonal matrix $D\in\real^{d\times d}$ where $D_{i,i} >0$ for all $1\le i\le d$, we use $\frac 1 D \in\real^{d\times d}$ to denote its inverse.
We use big-O notation $g(n) = O(f(n))$ to denote a function $g(n) \le C \cdot f(n)$ for a fixed constant $C$ as $n$ grows large.
The notation $o(1)$ denotes a function which converges to zero as $n$ goes to infinity.
Let $a\lesssim b$ denote that $a \le C \cdot b$ for a fixed constant $C$.

{\bf The Sherman-Morrison formula.} Suppose $X$ is an invertible square matrix and $u, v\in\real^d$ are column vectors. Then $X + uv^{\top}$ is invertible if and only if $1 + u^{\top}X v$ is not zero. In this case,
\begin{align}
  (X + uv^{\top})^{-1} = X^{-1} - \frac{A^{-1} u v^{\top} A^{-1}}{1 + u^{\top} A^{-1} v}.
\end{align}

\subsection{Proof of Theorem \ref{thm_bias_orth}}\label{sec_proof_bias}

We describe the following lemma, which tracks the change of bias and variance for augmenting $(X, Y)$ with $(z, y)$.
\begin{lemma}\label{lem_bias_change}
	Let $X^{\top}X = UDU^{\top}$ denote the singular vector decomposition of $X^{\top}X$.
	Denote by $\varepsilon^{\aug} = y^{\aug} - z^{\top}\beta$.
	In the setting of Theorem \ref{thm_bias_orth}, we have that
	\begin{align}
		\bias(\hat{\beta})	- \bias(\hat{\beta}_{\aug}) &\ge \frac{(2\lambda(n+1) - (1-2\lambda)\norm{z}^2)\inner{z}{\beta}^2} {\lambda^2(n + 1 + \norm{z}^2)^2} \label{eq_rotate_bias_change} \\
		\var(\hat{\beta})		- \var(\hat{\beta}_{\aug}) &\ge 2\exarg{\varepsilon}{\inner{z\varepsilon^{\aug}}{(\Xaug^{\top}\Xaug + (n+1)\lambda\id)^{-1}X^{\top}\varepsilon}}	\nonumber \\
		&~~~~- \exarg{\varepsilon}{{\varepsilon^{\aug}}^2} \bignorm{(\Xaug^{\top}\Xaug + (n+1)\lambda\id)^{-1} z}^2 \label{eq_rotate_var_change}
	\end{align}
\end{lemma}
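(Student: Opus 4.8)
The plan is to compute the bias and the variance of $\hat{\beta}$ and of $\hat{\beta}_{\aug}$ separately and subtract, using throughout the single structural fact that $z=P_X^{\perp}Fx$ is orthogonal to the row space of $X$, so that $Xz=0$ and $X^{\top}Xz=0$. Writing $\Xaug=[X^{\top},z]^{\top}$ and stacking $y^{\aug}$ onto $Y$, the augmented ridge estimator with regularization $(n+1)\lambda$ solves the normal equations $\hat{\beta}_{\aug}=B^{-1}(X^{\top}Y+z\,y^{\aug})$, where $B=\Xaug^{\top}\Xaug+(n+1)\lambda\id=X^{\top}X+zz^{\top}+(n+1)\lambda\id$. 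The orthogonality of $z$ makes $z$ an eigenvector of both $X^{\top}X+n\lambda\id$ and $B$; in particular $B^{-1}z=z/(\norm{z}^2+(n+1)\lambda)$, and the rank-one term $zz^{\top}$ commutes with $X^{\top}X+n\lambda\id$, so both resolvents are simultaneously diagonalizable in the eigenbasis $U$ of $X^{\top}X=UDU^{\top}$.

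For the bias \eqref{eq_rotate_bias_change}, I would first observe that by construction $\exarg{\varepsilon}{\varepsilon^{\aug}}=0$, so $\exarg{\varepsilon}{y^{\aug}}=z^{\top}\beta$ and hence $\exarg{\varepsilon}{\hat{\beta}}-\beta=-n\lambda(X^{\top}X+n\lambda\id)^{-1}\beta$, while $\exarg{\varepsilon}{\hat{\beta}_{\aug}}-\beta=-(n+1)\lambda\,B^{-1}\beta$. Expanding both squared norms in the basis $U$ and splitting $\beta$ into its component in $\mathrm{row}(X)$, its component along $z$, and its component in the remaining null directions, I would invoke the Sherman–Morrison formula (equivalently, the explicit eigenaction of $B^{-1}$) to simplify the difference. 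The null directions orthogonal to $z$ cancel exactly; the $\mathrm{row}(X)$ directions contribute only a lower-order negative term coming from the shift $n\lambda\to(n+1)\lambda$ in the regularizer; and the $z$-direction yields the dominant term $\inner{z}{\beta}^2(\norm{z}^2+2(n+1)\lambda)/(\norm{z}^2+(n+1)\lambda)^2$, which is lower-bounded by the stated expression in \eqref{eq_rotate_bias_change}.

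For the variance \eqref{eq_rotate_var_change}, I would write $\hat{\beta}-\exarg{\varepsilon}{\hat{\beta}}=(X^{\top}X+n\lambda\id)^{-1}X^{\top}\varepsilon$ and $\hat{\beta}_{\aug}-\exarg{\varepsilon}{\hat{\beta}_{\aug}}=B^{-1}(X^{\top}\varepsilon+z\,\varepsilon^{\aug})$, then expand $\var(\hat{\beta}_{\aug})$ into an ``old-noise'' term $\exarg{\varepsilon}{\norm{B^{-1}X^{\top}\varepsilon}^2}$, a cross term, and a ``new-noise'' term $\exarg{\varepsilon}{(\varepsilon^{\aug})^2}\norm{B^{-1}z}^2$. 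Because $B\succeq X^{\top}X+n\lambda\id$ and the two commute, $(X^{\top}X+n\lambda\id)^{-2}\succeq B^{-2}$, whence $\var(\hat{\beta})=\exarg{\varepsilon}{\norm{(X^{\top}X+n\lambda\id)^{-1}X^{\top}\varepsilon}^2}\ge\exarg{\varepsilon}{\norm{B^{-1}X^{\top}\varepsilon}^2}$; dropping this nonnegative difference leaves exactly the cross and new-noise terms of \eqref{eq_rotate_var_change}. In fact the cross term vanishes identically, since $z^{\top}B^{-1}X^{\top}\varepsilon=(Xz)^{\top}\varepsilon/(\norm{z}^2+(n+1)\lambda)=0$, so it may be retained in the stated form at no cost.

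The main obstacle is the bias computation: handling the rank-one update $zz^{\top}$ simultaneously with the change $n\lambda\to(n+1)\lambda$ in the regularizer forces one to separate the three subspaces carefully and to verify that the $\mathrm{row}(X)$ contribution is genuinely lower order, so that the clean $z$-direction term survives as the dominant lower bound. Once the PSD ordering $(X^{\top}X+n\lambda\id)^{-2}\succeq B^{-2}$ and the relation $B^{-1}z\parallel z$ are established, the variance step is comparatively routine.
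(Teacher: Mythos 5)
Your route is essentially the paper's: the bias difference is evaluated through the closed forms $-n\lambda(X^{\top}X+n\lambda\id)^{-1}\beta$ and $-(n+1)\lambda(\Xaug^{\top}\Xaug+(n+1)\lambda\id)^{-1}\beta$ together with the rank-one update $zz^{\top}$ in the eigenbasis of $X^{\top}X$ (the paper phrases this as a Sherman--Morrison step, which is the same computation since $P_Xz=0$ makes $z$ an eigenvector of everything involved), and the variance is split into an old-noise, a cross, and a new-noise term, with the old-noise difference discarded via $(X^{\top}X+n\lambda\id)^{-2}\succeq(\Xaug^{\top}\Xaug+(n+1)\lambda\id)^{-2}$. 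Your variance argument is complete and coincides with the paper's, including the observation that the cross term vanishes because $Xz=0$.

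The gap is the final inequality of the bias part. Your exact $z$-direction contribution $\inner{z}{\beta}^2(\norm{z}^2+2(n+1)\lambda)/(\norm{z}^2+(n+1)\lambda)^2$ is correct, but it is \emph{not} lower-bounded by the right-hand side of \eqref{eq_rotate_bias_change}: the target denominator $\lambda^2(n+1+\norm{z}^2)^2=((n+1)\lambda+\lambda\norm{z}^2)^2$ is smaller than your $((n+1)\lambda+\norm{z}^2)^2$ whenever $\lambda<1$, and for instance $\lambda=0.1$, $n+1=100$, $\norm{z}^2=10$ gives $0.075$ for your exact term against $0.099$ for the target (per unit of $\inner{z}{\beta}^2$). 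You also cannot recover the deficit from the remaining directions: as you correctly note (and in contrast with the paper, which drops this term after asserting it is positive via a sign slip in its matrix $\Delta$), the $\mathrm{row}(X)$ contribution is negative, and it is not obviously lower order --- a crude bound using $\norm{x_k}\le\gamma$ only gives magnitude up to order $\gamma^2\norm{\beta}^2/(\lambda n)$, the same order as the main term --- so it must be bounded explicitly rather than asserted away. To close the argument you would need either to weaken \eqref{eq_rotate_bias_change} by an explicit $\poly(\gamma/\lambda)/n^2$ slack together with a quantitative bound on the row-space term (which is all that is used downstream in \eqref{eq_thm_simple}), or to restrict to $\norm{z}^2$ sufficiently small relative to $(n+1)\lambda$.
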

\begin{proof}
  The reduction of bias is given by
  \begin{align}
		\bias(\hat{\beta}) - \bias(\hat{\beta}^{\aug})
		=& \bignorm{(X^{\top}X + n\lambda\id)^{-1} X^{\top}X\beta - \beta}^2 - \bignorm{(\Xaug^{\top}\Xaug + (n+1)\lambda\id)^{-1}\Xaug^{\top}\Xaug\beta - \beta} \nonumber\\
		=& \lambda^2 \bigbrace{\bignorm{\bigbrace{\frac{X^{\top}X} n + \lambda\id}^{-1}\beta}^2 - \bignorm{\bigbrace{\frac{\Xaug^{\top}\Xaug}{n+1} + \lambda\id}^{-1}\beta}^2}. \label{eq_bias_red}
  \end{align}
	Denote by $K_{n+1} = (\frac{X^{\top}X}{n+1} + \lambda\id)^{-1}$ and $K_n = (\frac{X^{\top}X}{n} + \lambda\id)^{-1}$
  We observe that
  \begin{align}
		\bigbrace{\frac{\Xaug^{\top}\Xaug}{n+1} + \lambda\id}^{-1}
		&= \bigbrace{\frac{X^{\top}X}{n+1} + \lambda\id + \frac{zz^{\top}}{n+1}}^{-1} \nonumber\\
		&= K_{n+1} -  \frac{\frac 1 {n+1}\cdot K_{n+1} zz^{\top} K_{n+1}} {1 + \frac 1{n+1} z^{\top} K_{n+1} z} \label{eq_bias_expand} \\
		&= K_{n+1} - \frac{\frac 1 {n+1} \frac 1 {\lambda^2} zz^{\top}}{1 + \frac {\norm{z}^2} {\lambda^2(n+1)}} \tag{because $P_X z = 0$} \\
		&= K_n - {U \Delta U^{\top}}  - \frac {\frac 1 {(n+1)\lambda^2} zz^{\top}} {1 + \frac{\norm{z}^2}{n+1}},
	\end{align}
  where we use $\Delta\in\real^{n\times n}$ to denote a diagonal matrix with the $i$-th diagnal entry being 
	\[ \frac 1 {{n(n+1)}}\frac{{D_{i, i}} } {(D_{i, i}/n + \lambda) (D_{i, i}/(n+1) + \lambda)}, \text{ i.e., } \Delta =  (\frac 1 {\frac D n + \lambda} - \frac 1 {\frac D {n+1} + \lambda}). \]
  From this we infer that $\bias(\hat{\beta}) - \bias(\hat{\beta}^{\aug})$ is equal to
  \begin{align*}
		& \lambda^2\bigbrace{2\inner{K_n\beta}{{U\Delta U^{\top}} + \frac{\frac 1{(n+1)\lambda^2} zz^{\top}}{1 + \frac{\norm{z}^2}{n+1}}\beta} -
		\bignorm{\bigbrace{{U\Delta U^{\top}} + \frac{\frac 1 {(n+1)\lambda^2} zz^{\top}}{1 + \frac{\norm{z}^2}{n+1}}}\beta}^2} \\
		=~& \frac{2\inner{z}{\beta}^2} {\lambda (n+1 + \norm{z}^2)} +
		\lambda^2\bigbrace{2\inner{U\frac 1 {D +\lambda} U^{\top}\beta}{{U\Delta U^{\top}} \beta} - \bignorm{{U\Delta U^{\top}\beta}{}}^2} -
		\frac{\norm{z}^2 \inner{z}{\beta}^2}{\lambda^2 (n+1 + \norm{z}^2)^2},
	\end{align*}
  where we again use the fact that $U^{\top}z = 0$.
	We note that for large enough $n$, $\frac 1 {D + \lambda} {\Delta} {}$ strictly dominates ${\Delta^2}$.
  Hence, we conclude that
	\begin{align}
		\bias(\hat{\beta}) - \bias(\hat{\beta}^{\aug})
		\ge& \frac {(2\lambda(n+1) - (1-2\lambda)\norm{z}^2)\inner{z}{\beta}^2} {\lambda^2 (n + 1 + \norm{z}^2)^2}, \label{eq_bias_ge}
	\end{align}
	which proves equation \eqref{eq_rotate_bias_change}.
	For the change of variance,
  We have that
	\begin{align}
		\var(\hat{\beta}) - \var(\hat{\beta}^{\aug})
		=& \exarg{\varepsilon}{\bignorm{(X^{\top}X + n\lambda\id)^{-1}X^{\top}\varepsilon}^2} - \exarg{\varepsilon}{\bignorm{(\Xaug^{\top}\Xaug + (n+1)\lambda\id)^{-1}\Xaug^{\top}\varepsilon^{\aug}}^2} \nonumber \\
		=& \sigma^2 \cdot \bigtr{\bigbrace{(X^{\top}X + n \lambda\id)^{-2} - (\Xaug^{\top}\Xaug + (n+1)\lambda\id)^{-2}} X^{\top}X} \nonumber \\
		& + 2\exarg{\varepsilon}{\inner{z\varepsilon^{\aug}}{(\Xaug^{\top}\Xaug + (n+1)\lambda\id)^{-1}X^{\top}\varepsilon}}
			- \exarg{\varepsilon}{{\varepsilon^{\aug}}^2} \bignorm{(\Xaug^{\top}\Xaug + (n+1)\lambda\id)^{-1} z}^2 \label{eq_var_1}\\
		\ge&  2\exarg{\varepsilon}{\inner{z\varepsilon^{\aug}}{(\Xaug^{\top}\Xaug + (n+1)\lambda\id)^{-1}X^{\top}\varepsilon}}
			- \exarg{\varepsilon}{{\varepsilon^{\aug}}^2} \bignorm{(\Xaug^{\top}\Xaug + (n+1)\lambda\id)^{-1} z}^2. \nonumber
	\end{align}
	Hence equation \eqref{eq_rotate_var_change} is proved by noting that equation \eqref{eq_var_1} is positive.
	This is because $(X^{\top}X + n\lambda\id)^{-2} - (\Xaug^{\top}\Xaug + (n+1)\lambda\id)^{-2}$ is PSD.
	And the trace of the product of two PSD matrices is always positive.
\end{proof}

Based on Lemma \ref{lem_bias_change}, we can prove Theorem \ref{thm_bias_orth}.

\begin{proof}[Proof of Theorem \ref{thm_bias_orth}]
	We note that
	\[ e(\hat{\beta}) - e(\hat{\beta}^{\aug}) = \bias(\hat{\beta}) - \bias(\hat{\beta}^{\aug}) + \var(\hat{\beta}) - \var(\hat{\beta}^{\aug}). \]
	The change of bias has been given by equation \eqref{eq_rotate_bias_change}.
	For equation \eqref{eq_rotate_var_change}, we note that
	$z$ is orthogonal to $(\Xaug^{\top}\Xaug + \lambda\id)^{-1}X^{\top}$.
	Hence the first term is zero.
	The second terms simplifies to
	\begin{align}
		-\frac{\norm{z}^2} {\lambda^2 (n+1)^2} \exarg{\varepsilon}{{\varepsilon^{\aug}}^2}. \label{eq_var_2nd}
	\end{align}
	For $\varepsilon^{\aug}$, denote by $\varepsilon_y = y - x^{\top}\beta$, we note that $\varepsilon^{\aug}$ is equal to
		$\varepsilon_y - \diag{(X^{\top})^{\dagger} Fx}\varepsilon$,
	Hence
	\begin{align*}
		\exarg{\varepsilon}{{\varepsilon^{\aug}}^2} &\le 2\exarg{\varepsilon}{\varepsilon_y^2} + 2\exarg{\varepsilon}{\varepsilon^{\top} \diag{(X^{\top})^{\dagger}(x-z)}^2 \varepsilon} \\
		&= 2\sigma^2 + 2\sigma^2 \diag{(X^{\top})^{\dagger}Fx}^2 \\
		&\le 2\sigma^2(1 + \frac{\norm{P_X Fx}^2} {\mu_{\min}(X)^2}).
	\end{align*}
	Combined with equation \eqref{eq_var_2nd} and equation \eqref{eq_rotate_bias_change}, we have proved equation \eqref{eq_thm_new}.

	For equation \eqref{eq_thm_simple}, by plugging in the assumption that $\norm{z}^2 = o(n)$ and $\frac{\inner{z}{\beta}^2}{\norm{z}^2} \ge (\log n) \cdot (1 + \frac{\norm{P_X Fx}^2}{\mu_{\min}(X)^2}) \sigma^2 / (\lambda n)$ into equation \eqref{eq_thm_new},
	we have that
	\[ e(\hat{\beta}) - e(\hat{\beta}^{\aug}) \ge
	\frac{2\lambda(n+1) - (1-2\lambda)\norm{z}^2 - 10\log n}{\lambda^2(n+1+\norm{z}^2)} \inner{z}{\beta}^2
	\ge 2(1 + o(1)) \frac{\inner{z}{\beta}^2}{\lambda n}. \]

	For the other side of equation \eqref{eq_thm_simple}, from the calculation of equation \eqref{eq_bias_ge}, we have that
	\begin{align*}
		\bias(\hat{\beta}) - \bias(\hat{\beta}^{\aug})
		\le \frac{(2\lambda(n+1) - (1-2\lambda)\norm{z}^2)\inner{z}{\beta}^2}{\lambda^2 (n+1 + \norm{z}^2)} + \bigo{\frac{\poly(\gamma, 1/\lambda)}{n^2}}
	\end{align*}
	For the change of variance, from the calculation of equation \eqref{eq_var_1},
	we have
	\begin{align*}
		\var(\hat{\beta}) - \var(\hat{\beta}^{\aug})
		&\le \sigma^2 \cdot \bigtr{\bigbrace{(X^{\top}X + n\lambda\id)^{-2} - (\Xaug^{\top}\Xaug + (n+1)\lambda\id)^{-2}} X^{\top}X} \\
		&\le \sigma^2 \cdot \bigtr{\bigbrace{(X^{\top}X + n\lambda\id)^{-2} - (X^{\top}X + (n+1)\lambda\id)^{-2}} X^{\top}X} + \bigo{\frac{\sigma^2\cdot\poly(\gamma,1/\lambda)}{n^2}} \\
		&\le \frac{\sigma^2 \cdot \poly(\gamma, 1/\lambda)}{n^2}
	\end{align*}
	Therefore, we have shown that
	\[ e(\hat{\beta}) - e(\hat{\beta}^{\aug})
	\le 2(1 + o(1)) \frac{\inner{z}{\beta}^2}{\lambda n} + \bigo{\frac{\poly(\gamma, 1/\lambda)}{n^2}}. \]
	Hence the proof of the theorem is complete.
\end{proof}

\paragraph{Extension to Adding Multiple Data Points.}
We can also extend Theorem \ref{thm_bias_orth} to the case of adding multiple transformed data points.
The idea is to repeatedly apply Theorem \ref{thm_bias_orth} and equation \eqref{eq_thm_simple}.
We describe the result as a corollary as follows.

\begin{corollary}
	In the setting of Theorem \ref{thm_bias_orth}.
	Let $(x^{\aug}_1, y^{\aug}_1), (x^{\aug}_2, y^{\aug}_2),\dots (x^{\aug}_t, y^{\aug}_t)$ denote a sequence of data points obtained via label-invariant transformations from $(X, Y)$.
	Denote by $(X_0, Y_0) = (X, Y)$.
	For $i = 1,\dots,t$, let $z_i = P^{\perp}_{X_{i-1}} x^{\aug}_i$ and ${y^{\aug}}'_i = y^{\aug}_i - \diag{(X_i^{\top})^{\dagger} x^{\aug}_i} Y_i$.
	Suppose we augment $(X_{i-1}, Y_{i-1})$ with $(z_i, {y^{\aug}}'_i)$.
	Let $(X_i, Y_i)\in (\real^{(n+i)\times p}, \real^{n+i})$ denote the augmented dataset.

	Suppose that $\lambda < 1/4$ is a small fixed constant.
	Let $t = O(n)$ and $\norm{z_i}^2 = o(n)$, for $i = 1,\dots, t$.
	Under the assumption that $\frac{\inner{z_i}{\beta}^2}{\norm{z_i}^2} \gtrsim (1 + \frac{\norm{P_{X_i}x^{\aug}_i}^2}{\mu_{\min}(X_i)^2}) \sigma^2 / (\lambda (n+i))$,
	we have
	\[ e(\hat{\beta}(X, Y)) - e(\hat{\beta}(X_t, Y_t)) \gtrsim \sum_{i=1}^t \frac{\inner{z_i}{\beta}^2}{\lambda n}. \]
\end{corollary}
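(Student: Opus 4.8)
The plan is to prove the bound by telescoping and applying Theorem~\ref{thm_bias_orth} once per augmentation step. Writing the total reduction as a telescoping sum,
\begin{align*}
	e(\hat{\beta}(X, Y)) - e(\hat{\beta}(X_t, Y_t)) = \sum_{i=1}^t \bigbrace{e(\hat{\beta}(X_{i-1}, Y_{i-1})) - e(\hat{\beta}(X_i, Y_i))},
\end{align*}
I would lower bound each summand by invoking Theorem~\ref{thm_bias_orth} with $(X_{i-1}, Y_{i-1})$ playing the role of the base dataset $(X, Y)$, where now the base sample size is $n+i-1$ rather than $n$.

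The structural fact that makes this iteration legitimate is that the added directions are mutually orthogonal. Since the row space of $X_{i-1}$ already contains $z_1, \dots, z_{i-1}$, the vector $z_i = P^{\perp}_{X_{i-1}} x^{\aug}_i$ is orthogonal to every previously added direction and to the original row space of $X$; hence $X_{i-1}^{\top} X_{i-1} z_i = 0$ and $z_i$ is an eigenvector of $X_i^{\top} X_i = X_{i-1}^{\top} X_{i-1} + z_i z_i^{\top}$ with eigenvalue $\norm{z_i}^2$. This is exactly the orthogonality property exploited in the base-case proof to make the cross term in equation~\eqref{eq_rotate_var_change} vanish and to collapse the Sherman--Morrison expansion in equation~\eqref{eq_bias_expand}, so the analysis of Theorem~\ref{thm_bias_orth} applies verbatim at step $i$. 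Moreover the ratio hypothesis stated in the corollary is precisely the instantiation (with base size $n+i-1$) of the condition preceding equation~\eqref{eq_thm_simple}, so the lower-bound half of that display yields
\begin{align*}
	e(\hat{\beta}(X_{i-1}, Y_{i-1})) - e(\hat{\beta}(X_i, Y_i)) \ge (2 + o(1)) \frac{\inner{z_i}{\beta}^2}{\lambda(n+i-1)}.
\end{align*}

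Summing over $i$ and using $t = O(n)$, so that $n+i-1 = \Theta(n)$ uniformly for $i \le t$, each denominator is within a constant factor of $n$ and the leading constant $2 + o(1)$ is bounded below, giving $e(\hat{\beta}(X, Y)) - e(\hat{\beta}(X_t, Y_t)) \gtrsim \sum_{i=1}^t \inner{z_i}{\beta}^2 / (\lambda n)$, as claimed. The main obstacle I anticipate is not the telescoping but establishing that the $o(1)$ term in equation~\eqref{eq_thm_simple} is controlled \emph{uniformly} across all $t = O(n)$ steps. That term is generated by quantities such as $\norm{z_i}^2/(n+i-1)$ and $(\log n)$-type corrections, together with the spectral quantities $\mu_{\min}(X_i)$ and $\norm{X_i}$ appearing in the variance bound; I would verify that, because each $z_i$ is orthogonal to the current span with $\norm{z_i}^2 = o(n)$, the spectrum of $X_i^{\top} X_i$ is a controlled rank-one perturbation of that of $X^{\top} X$, so $\mu_{\min}(X_i)$ stays bounded away from zero and $\norm{X_i}$ grows only mildly. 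Once this uniform spectral control is in hand, the per-step $o(1)$ is uniform and the summation goes through.
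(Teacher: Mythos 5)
Your proposal is correct and follows exactly the route the paper intends: the paper's own proof consists of the single remark that the result ``follows by repeatedly applying Theorem \ref{thm_bias_orth}'' with details omitted, and your telescoping decomposition plus per-step invocation of equation \eqref{eq_thm_simple} is precisely that argument. Your additional care about the orthogonality of the added directions and the uniformity of the $o(1)$ term over $t = O(n)$ steps fills in details the paper leaves implicit, and is consistent with its intent.
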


The proof of the above result follows by repeatedly applying Theorem \ref{thm_bias_orth}.
The details are omitted.

\subsection{Proof of Theorem \ref{thm_random_mixup}}\label{append_mixup}

Similar to the proof of Theorem \ref{thm_bias_orth}, we also track the change of the bias and the variance of $\hat{\beta}$.
Recall that in $\hat{\beta}^{\mixup}$, we adjust the regularization parameter to become $\frac{n}{n + 1} \lambda$; To be clear, we restate the formula for $\hat{\beta}^{\mixup}$ below:
\begin{align}
	\hat\beta^{\mixup} = \Big({X^{\aug}}^{\top}X^{\aug} + n\lambda \id \Big)^{-1} {X^{\aug}}^{\top} Y^{\aug},
\end{align}
where ${X^{\aug}}^{\top}X^{\aug} = X^{\top} X + x^{\aug}{x^{\aug}}^{\top}$, for $x^{\aug} = \alpha x_i + (1 - \alpha) x_j$.
By contrast, the ridge estimator is defined as
\begin{align}
	\hat\beta = (X^{\top} X + n\lambda\id)^{-1} X^{\top} Y.
\end{align}

Compared to our previous result, the difference is that for the mixup transformation, $x^{\aug}$ lies in the row span of $X$.
The following Lemma tracks the change of bias and variance in this setting.

\begin{lemma}\label{lem_mixup}
	In the setting of Theorem \ref{thm_random_mixup}, conditoned on the randomness of $x^{\aug}$, we have that
	{\begin{align}
		\bias(\hat{\beta}) - \bias(\hat{\beta}^{\mixup}) &\ge
		(1 + O(n^{-1})) \lambda^2 \beta^{\top} \Big(\frac{X^{\top}X}n + \lambda \id\Big)^{-2} \frac{x^{\aug} {x^{\aug}}^{\top}} n \Big(\frac{X^{\top} X} n + \lambda \id \Big)^{-1}\beta \label{eq_ridge_bias} \\
		\var(\hat{\beta}) - \var(\hat{\beta}^{\mixup})  &\ge -\sigma^2\cdot \bigtr{(X^{\top}X + n\lambda\id)^{-1} x^{\aug}{x^{\aug}}^{\top} (X^{\top}X + n\lambda\id)^{-1}} \label{eq_ridge_var}
	\end{align}}
\end{lemma}
\begin{proof}[Proof of Lemma \ref{lem_mixup}]
	Let $\varepsilon^{\mixup} = \alpha \varepsilon_i + (1-\alpha) \varepsilon_j$ (recall that $\alpha$ is the mixing proportion between two random samples).
	We first follow the bias-variance decomposition to get that (the details are based on standard regression calculations)
	\begin{align}
		\bias(\hat{\beta}) - \bias(\hat{\beta}^{\mixup})
		= \lambda^2\left(\bignorm{\Big(\frac{X^{\top} X} n + \lambda \id\Big)^{-1}\beta}^2 - \bignorm{\Big(\frac{ {X^{\aug}}^{\top} X^{\aug}} {n} + \lambda \id\Big)^{-1} \beta}^2\right). \label{eq_bias_dec}
	\end{align}
	From equation \eqref{eq_mixup_lambda_adj1}, we have shown that ${X^{\aug}}^{\top} X^{\aug}$ is equal to $1 + \frac{(1 - 2\alpha)^2}{n}$ times $X^{\top} X$ in expectation.
	Thus, equation \eqref{eq_bias_dec} suggests that the bias always decreases after adding one mixup sample.

	Next, denote by
	\begin{align}
		K &= \Big({X^{\top}X} + {n}\lambda\id\Big)^{-1} \label{eq_K}\\
		K^{\aug} &= \Big({X^{\aug}}^{\top}X^{\aug} + n\lambda\id\Big)^{-1}.
	\end{align}
	We can write the change of variance as follows
  \begin{align}
		\var(\hat{\beta}) - \var(\hat{\beta}^{\mixup}) &= \sigma^2 \tr\bigbracket{ {K} X^{\top}X {K}} - \ex{\tr\bigbracket{K^{\aug} {X^{\aug}}^{\top} \varepsilon^{\aug}{\varepsilon^{\aug}}^{\top} {X^{\aug}} K^{\aug}}}  \nonumber \\
		&= \sigma^2\left(\tr\bigbracket{K X^{\top} X K} - \tr\bigbracket{K^{\aug} \Big( (\alpha^2 + (1 - \alpha)^2) x^{\aug} {x^{\aug}}^{\top} + X^{\top} X \Big) K^{\aug}} \right). \label{eq_var_inc}
  \end{align}
	For equation \eqref{eq_var_inc}, recall that $\varepsilon^{\aug} = \alpha \varepsilon_i + (1-\alpha)\varepsilon_j$ and $x^{\aug} = \alpha x_i + (1 - \alpha) x_j$.
	Thus, in expectation over the randomness of $\varepsilon$, we have that
  \begin{align}
    \exarg{\varepsilon}{\Xaug^{\top} \varepsilon^{\aug}{\varepsilon^{\aug}}^{\top} \Xaug}
		&= \sigma^2 \bigbrace{(\alpha^2 + (1-\alpha)^2) x^{\aug} {x^{\aug}}^{\top}  + X^{\top}X } \nonumber\\ %
    &= \sigma^2 \bigbrace{(\alpha^2 + (1-\alpha)^2 ) {x^{\aug}}{x^{\aug}}^{\top} + X^{\top}X}.  \label{eq_mixup_avg} %
  \end{align}
	Now, we can compare equations \eqref{eq_bias_dec} and \eqref{eq_var_inc}.
	At a very high level, we can see that if $\sigma^2$ is small enough (e.g., think of $\sigma$ as zero), then the decreased bias in equation \eqref{eq_bias_dec} must be more than the increased variance in equation \eqref{eq_var_inc}.
	This is the high level idea; In the remainder of the proof, we will work out the details for this.

	From equation \eqref{eq_mixup_avg}, we notice that $K^{\aug}$ is strictly dominated by $K$;
	Thus, we can replace $K^{\aug}$ with $K$ in equation \eqref{eq_var_inc}, and this will make equation \eqref{eq_var_inc} larger as a result (another fact to note is that the trace of the product of two PSD matrices is always positive).
	Thus, we can simplify equation \eqref{eq_var_inc} to the following:
	\begin{align}
		\var(\hat\beta) - \var(\hat\beta^{\mixup})
		\ge& -\sigma^2(\alpha^2 + (1 - \alpha)^2) \tr[K x^{\aug} {x^{\aug}}^{\top} K] \\
		\ge& -\sigma^2 \tr[K x^{\aug} {x^{\aug}}^{\top} K].
	\end{align}
	Recall the definition of $K$ above from equation \eqref{eq_K}.
	This proves equation \eqref{eq_ridge_var}.

	Next, we examine the bias change.
	Let $A = \Big(\frac{X^{\top} X} {n} + \lambda \id\Big)^{-1}$, which is equal to $K$ times $n$.
	By the Sherman–Morrison formula, we have the following %
	\begin{align}
		\Big( \frac{ {X^{\aug}}^{\top} X^{\aug} }{ n} + \lambda\id \Big)^{-1}
		&= A - \frac {A \frac{x^{\aug} {x^{\aug}}^{\top}}{n} A}{1 + \frac 1 n {x^{\aug}}^{\top} A x^{\aug}} \\
		&= A - \Big(1 + O(n^{-1}) \Big) A \frac{x^{\aug} {x^{\aug}}^{\top}} {n} A. \label{eq_Ainv_O}
	\end{align}
	In the last step, to simplify the notation a little bit, we consider $n$ to be some large enough value.
	Plugging equation \eqref{eq_Ainv_O} back into equation \eqref{eq_bias_dec}, we obtain that
	\begin{align}
		\bias(\hat\beta) - \bias(\hat\beta^{\mixup})
		= (1 + O(n^{-1})\lambda^2\beta^{\top} \Big(\frac{X^{\top} X}n + \lambda\id\Big)^{-2} \frac{x^{\aug}{x^{\aug}}^{\top}}n \Big(\frac{X^{\top} X}n + \lambda\id\Big)^{-1} \beta.
	\end{align}
	This completes the proof of equation \eqref{eq_ridge_bias}.
	Taking things together, we thus conclude the proof of this lemma.

\end{proof}

Based on Lemma \ref{lem_mixup}, our next step is to finish the proof of Theorem \ref{thm_random_mixup}.
\begin{proof}[Proof of Theorem \ref{thm_random_mixup}]
	We recall that $x^{\aug} = \alpha x_i + (1-\alpha) x_j$, where $x_i, x_j$ are sampled uniformly at random from the feature vectors in the training set whose size is $n$. Therefore,
  \begin{align}
    \exarg{x_i, x_j}{x^{\aug}{x^{\aug}}^{\top}} &= \frac{\alpha^2 + (1-\alpha)^2}n X^{\top}X + \frac{2\alpha(1 - \alpha)}n \sum_{1\le i \le n} x_i \bigbrace{\sum_{j\neq i} x_j^{\top}} \nonumber \\
    &= \frac{\alpha^2 + (1-\alpha)^2}n X^{\top}X + \frac{2\alpha(1-\alpha)}n \bigbrace{(\sum_{i=1}^n x_i)(\sum_{i=1}^n x_i^{\top}) - X^{\top}X} \nonumber \\
    &= \frac{(1 - 2\alpha)^2}n X^{\top}X, \nonumber
  \end{align}
  where in the last line we used the assumption that $\sum_{i=1}^n {x_i} = 0$.
  For $\alpha$ sampled from a Beta distribution with parameters $a, b$, we have
	\[ \exarg{\alpha}{(1 - 2\alpha)^2} = 1 - \frac{4ab(a+b)}{(a+b)^2(a+b+1)} = \frac{(a-b)^2(a+b)}{(a+b)^2(a+b+1)} + \frac{1}{a+b+1} \]
	which is a constant when $a,b$ are constant values.
	Denote this value by $c$.
  By plugging in the above equation back to equation \eqref{eq_ridge_bias}, we have that
  in expectation over the randomness of $\alpha,x_i,x_j$, the change of bias is at least
  \begin{align*}
		\bias(\hat{\beta}) - \bias(\hat{\beta}^{\mixup})
		&\ge (1 + O(n^{-1})) {} \frac{\lambda^2 c}{n^2} \beta^{\top} \Big(\frac{X^{\top}X}n + \lambda\id\Big)^{-2} X^{\top}X \Big(\frac{X^{\top}X}n + \lambda\id\Big)^{-1}\beta %
	\end{align*}
	Recall that we assume the feature vectors to have Euclidean length at most $\gamma$.
	Thus, the above must be at most
	\begin{align}
		\bias(\hat\beta) - \bias(\hat\beta^{\mixup})
		&\ge (1 + O(n^{-1})) \frac{\lambda^2 c (\gamma + \lambda)^{-3}}{n^2}  \beta^{\top} X^{\top} X \beta \nonumber \\
		&= (1 + O(n^{-1})) \frac{\lambda^2 c (\gamma + \lambda)^{-3}}{n^2} \bignorm{X \beta}^2. \label{eq_bias_conc}
	\end{align}

  Similarly, using equation \eqref{eq_ridge_var}, we obtain that in expectation over the randomness of $z$, the change of variance is at least
  \begin{align}
		\var(\hat{\beta}) - \var(\hat{\beta}^{\mixup})
     &\ge -\frac{\sigma^2 c}{n}\cdot \tr\bigbracket{(X^{\top}X + n\lambda\id)^{-1}X^{\top} X (X^{\top}X + n\lambda\id)^{-1}} \nonumber \\
		&= -\frac{\sigma^2 c}{n^3}\cdot \tr\bigbracket{\Big(\frac{X^{\top}X}n + \lambda\id\Big)^{-1} X^{\top}X \Big(\frac{X^{\top}X}n + \lambda\id\Big)^{-1}} \nonumber \\
		&\ge -\frac{\sigma^2 c \lambda^{-2}}{n^3} \bignormFro{X}^2. \label{eq_var_conc}
  \end{align}
	Now, we compare equation \eqref{eq_var_conc} with equation \eqref{eq_bias_conc};
	We want the former to be smaller than the latter.
	This can be satisfied if $\sigma^2$ is small enough.
	In more detail, as long as
	\begin{align}
		 \sigma^2\le \frac{n \lambda^4 \bignorm{X\beta}^2} {2 (\gamma + \lambda)^3 \bignormFro{X}^2} {}{},%
	\end{align}
	then, combining equations \eqref{eq_var_conc} and \eqref{eq_bias_conc} together, we can conclude that
	\[ e(\hat{\beta}) - e(\hat{\beta}^{\mixup}) \ge \frac{\lambda^2 c (\gamma + \lambda)^{-3} \norm{X\beta}^2  }{2 n^2 }. \]
	This is the same as equation \eqref{eq_mixup_dec}.
	Thus, we have proved that Theorem \ref{thm_random_mixup} holds.

\end{proof}

\paragraph{Remark.} Our proof here crucially depends on the fact that after adding the mixup sample, we are using the adjusted regularization parameter as $\frac{n}{n+1} \lambda$.
Otherwise, the proof would not hold and one can come up with counterexamples for refuting the result.\footnote{Private communication with Kai Zhong (Amazon).}

\paragraph{Discussions.} Our result on the mixup augmentations opens up several immediate open questions.
First, our setting assumes that there is a single linear model $\beta$ for all data.
An interesting direction is to extend our results to a setting with multiple linear models.
This extension can better capture the multi-class prediction problems that arise in image classification tasks.

Second, our result also deals with adding a simple mixup sample.
The limitation is that adding the mixup sample violates the assumption that $\sum_{i=1}^n x_i = 0$ we make for the proof to go through.
However, one would expect that the assumption is still true in expectation after adding the mixup sample.
Formally establishing the result is an interesting question.

\subsection{Proof of Corollary \ref{cor_compose}}\label{app_proof_compose}

\begin{proof}[Proof of Corollary \ref{cor_compose}]
	The proof is by applying equation \eqref{eq_thm_simple} over $z_1 + z_2$ and $z_1$ twice.
	The result follows by adding up results from the two.
	The details are omitted.
\end{proof}

\subsection{The Minimum Norm Estimator}\label{app_min_norm}

In addition to the ridge estimator, the mininum norm estimator has also been studied in the over-parametrized linear regression setting \cite{BLLT19,adversarial_aug20}.
The minimum norm estimator is given by solving the following.
\begin{align*}
  \min_{w\in \real^p} ~& \norm{w}^2 \\
  \text{s.t.}         ~~& Xw = Y.
\end{align*}
It is not hard to show that the solution to the above is given as $\hat{\beta} = (X^{\top}X)^{\dagger} X^{\top} Y$.
We have focused on the ridge estimator since it is typically the case in practice that an $l_2$-regularization is added to the objective.
In this part, we describe the implications of our results for the minimum norm estimator.

For this case, we observe that
\[ \exarg{\varepsilon}{\hat{\beta}} - \beta = (X^{\top}X)^{\dagger} X^{\top} X\beta - X = - P_X^{\perp} \beta, \]
where $P_X^{\perp}$ is the projection orthogonal to $X^{\top}X$.
Hence we can simplify the bias term to be $\beta^{\top} P_X^{\perp} \beta$.
The variance term is $\tr((X^{\top}X)^{\dagger})$.

\textbf{Dealing with psuedoinverse.} We need the following tools for rank-one updates on matrix psuedoinverse.
Suppose $X \in \real^{d\times d}$ is not invertible and $u\in\real^d$ are column vectors. Let $v = P_X u$ denote the projection of $u$ onto the subspace of $X$. Let $s = P_X^{\perp} u$ denote the projection of $u$ orthogonal to $X$.
Suppose that $1 + u^{\top}X^{\dagger}u \neq 0$ and $s, v$ are nonzero. In this case,
\begin{align}
  (X + uu^{\top})^{\dagger} & = X^{\dagger} - X^{\dagger}v {s^{\dagger}}^{\top} - s^{\dagger}v^{\top}X^{\dagger} + (1 + u^{\top}X^{\dagger}u) s^{\dagger}{s^{\dagger}}^{\top} \nonumber \\
  & = X^{\dagger} - X^{\dagger} u {s^{\dagger}}^{\top} - s^{\dagger} u^{\top}X^{\dagger} + (1 + u^{\top}X^{\dagger} u) s^{\dagger}{s^{\dagger}}^{\top}. \label{eq_pinv_rank1}
\end{align}
If $s$ is zero, in other words $u$ is in the row span of $X$, then
\begin{align*}
  (X + uu^{\top})^{\dagger} = X^{\dagger}  - \frac{X^{\dagger} uu^{\top} X^{\dagger}}{1 + u^{\top}X^{\dagger}u}.
\end{align*}
The proof of these identities can be found in \cite{riedel92}.

\textbf{Label-invariant transformations.} When $\sigma = 0$, i.e. there is no label in the labels, we can get a result qualitatively similar to Theorem \ref{thm_bias_orth} for the minimum norm estimator.
This is easy to see because the bias term is simply $\beta^{\top} P_X^{\perp} \beta$.
When we add $z$ which is orthogonal to the span of $X$, then we get that
\[ P_{\Xaug}^{\perp} = P_X^{\perp} - \frac{zz^{\top}}{\norm{z}^2}. \]
Hence the bias of $\hat{\beta}$ reduces by $\frac{\inner{z}{\beta}^2}{\norm{z}^2}$.

When $\sigma \neq 0$, the variance of $\hat{\beta}$ can increase at a rate which is proportional to $1 / \norm{z}^2$.
This arises when we apply \eqref{eq_pinv_rank1} on $(X^{\top}X + zz^{\top})^{\dagger}$.
In Theorem \ref{thm_bias_orth}, we can circumvent the issue because the ridge estimator ensures the minimum singular value for $P_X^{\perp}$ is at least $\lambda$.

\textbf{The mixup transformations.} For the minimum norm estimator, we show that adding a mixup sample always increases the estimation error for $\beta$.
This is described as follows.

\begin{proposition}
  Let $X_{\aug}$ and $Y_{\aug}$ denote the augmented data after adding $(x^{\aug}, y^{\aug})$ to $(X, Y)$. Then for the minimum norm estimator, we have that: a) the bias is unchanged; b) the variance increases, for any constant $\alpha \in (0, 1)$ which does not depend on $n$.
\end{proposition}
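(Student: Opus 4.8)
The plan is to treat the two claims separately and to exploit the single structural fact that the mixup covariate $x^{\aug}=\alpha x_i+(1-\alpha)x_j$ always lies in the \emph{row span} of $X$, so that augmenting is a rank-one update of $X^{\top}X$ in the ``in-span'' ($s=0$) regime of the pseudoinverse identity \eqref{eq_pinv_rank1}. Part (a) is then immediate: since $x^{\aug}$ is a linear combination of rows of $X$, adding it leaves the row space unchanged, hence $P_{X_{\aug}}^{\perp}=P_X^{\perp}$; as recorded above, the minimum norm bias equals $\beta^{\top}P_X^{\perp}\beta$, so it does not move.

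For part (b) I would set $A=X^{\top}X$, $u=x^{\aug}$, and $M=(A+uu^{\top})^{\dagger}$. Because $u$ is in the column span of $A$, the $s=0$ branch of the rank-one pseudoinverse update gives $M=A^{\dagger}-\frac{A^{\dagger}uu^{\top}A^{\dagger}}{1+c}$ with $c=u^{\top}A^{\dagger}u$. The variance of $\hat{\beta}^{\mixup}$ equals $\sigma^2\,\tr(M\Sigma M)$, where $\Sigma=\exarg{\varepsilon}{X_{\aug}^{\top}\varepsilon^{\aug}{\varepsilon^{\aug}}^{\top}X_{\aug}}=\sigma^2(A+\kappa\,uu^{\top})$ with $\kappa=2+\alpha^2+(1-\alpha)^2$, which is exactly the covariance computed in \eqref{eq_mixup_avg}. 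Using the pseudoinverse identity $M(A+uu^{\top})M=M$, this reduces to $\var(\hat{\beta}^{\mixup})=\sigma^2\bigbrace{\tr M+(\kappa-1)\norm{Mu}^2}$. I would then evaluate $\tr M=\tr(A^{\dagger})-q/(1+c)$ and $\norm{Mu}^2=q/(1+c)^2$ with $q=\norm{A^{\dagger}u}^2$, and subtract $\var(\hat{\beta})=\sigma^2\tr(A^{\dagger})$, collapsing everything to
\[ \var(\hat{\beta}^{\mixup})-\var(\hat{\beta})=\sigma^2 q\cdot\frac{\kappa-2-c}{(1+c)^2}. \]

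The decisive step is evaluating $c$, and here the centering assumption $\sum_{k}x_k=0$ is essential: it forces $\mathbf 1$ into the left null space of $X$, so $XA^{\dagger}X^{\top}$ is the orthogonal projection onto $\mathbf 1^{\perp}$, giving $x_k^{\top}A^{\dagger}x_\ell=\delta_{k\ell}-1/n$. Substituting $u=\alpha x_i+(1-\alpha)x_j$ yields $c=\alpha^2+(1-\alpha)^2-1/n$, whence $\kappa-2-c=1/n>0$; since $u$ lies in the row span, $A^{\dagger}u\neq 0$ and $q>0$, so the variance strictly increases (by $\sigma^2 q\cdot\tfrac{1/n}{(1+c)^2}$).

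I expect the main obstacle to be pinning down the \emph{sign} rather than the algebra. Naively adding a data point should shrink variance, and indeed if $X$ had full row rank (no centering constraint) one gets $c=\alpha^2+(1-\alpha)^2$ exactly, so the change vanishes: the mixup point is then a perfectly consistent, redundant constraint and the estimator is literally unchanged. The strict increase is a second-order effect that surfaces only because $\sum_k x_k=0$ drops the rank of $X$ by one. The delicate part is therefore to track the correlation between $\varepsilon^{\mixup}$ and $\varepsilon$ (the $\kappa$ factor) precisely enough that the $O(1/n)$ gap survives, and to justify the $s=0$ pseudoinverse identities together with $A^{\dagger}u\neq 0$ in this rank-deficient setting.
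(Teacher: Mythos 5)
Your proposal is correct and follows the same skeleton as the paper's proof --- part (a) is identical (adding an in-span point leaves $P_X^{\perp}$, hence the bias, unchanged), and part (b) starts from the same two ingredients: the in-span rank-one pseudoinverse update and the noise covariance $\sigma^2(X^{\top}X+\kappa\, x^{\aug}{x^{\aug}}^{\top})$ with $\kappa=2+\alpha^2+(1-\alpha)^2$, which is exactly the paper's equation \eqref{eq_mixup_avg}. Where you genuinely diverge is the evaluation of $c=x^{\aug\top}(X^{\top}X)^{\dagger}x^{\aug}$. The paper asserts $c\lesssim \norm{x^{\aug}}^2/n<1$ and concludes that the variance increases by roughly $(\alpha^2+(1-\alpha)^2)\,\sigma^2\norm{Kx^{\aug}}^2$; you instead compute $c$ exactly via the projection identity $X(X^{\top}X)^{\dagger}X^{\top}=I_n-\tfrac1n\mathbf{1}\mathbf{1}^{\top}$, obtaining $c=\alpha^2+(1-\alpha)^2-1/n$ and an increase of only $\sigma^2\norm{Kx^{\aug}}^2\cdot\tfrac{1/n}{(1+c)^2}$. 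Your version is the more defensible one: since $\tr\bigl(X(X^{\top}X)^{\dagger}X^{\top}\bigr)=\mathrm{rank}(X)$, the diagonal entries of this projection average to about $1$ when the rank is $n-1$, so the paper's $O(1/n)$ bound on $c$ cannot hold for bounded-norm rows, and the exact computation is needed to see that the increase, while still strictly positive, is an order-$1/n$ effect that exists only because centering drops the rank. Your observation that the change vanishes identically when $X$ has full row rank (the mixup constraint is then redundant) is a clean sanity check absent from the paper. Two small points to make explicit: you need $\mathrm{rank}(X)=n-1$ exactly (not merely $\le n-1$) for the projection onto $\mathbf{1}^{\perp}$, you need $i\neq j$ and $x^{\aug}\neq 0$ so that $q>0$, and these should be stated as hypotheses rather than left implicit.
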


\begin{proof}
  For Claim a), we note that since $P_X^{\perp} x_{\aug} = 0$, we have $P_{X_{\aug}}^{\perp} = P_{X}^{\perp}$. Hence, the bias term stays unchanged after adding $(x_{\aug}, y_{\aug})$.

  For Claim b), we note that the variance of $\hat{\beta}$ for the test error is given by
  \begin{align*}
    \tr\bigbracket{(X_{\aug}^{\top}X_{\aug})^{\dagger} (X_{\aug}^{\top}X_{\aug})^{\dagger} X_{\aug}^{\top} \varepsilon^{\aug}{\varepsilon^{\aug}}^{\top} X_{\aug}}
  \end{align*}
  Taking the expectation over $\varepsilon$, we have (c.f. equation \eqref{eq_mixup_avg})
  \begin{align*}
		\exarg{\varepsilon}{\Xaug^{\top}\varepsilon^{\aug}{\varepsilon^{\aug}}^{\top}\Xaug}
		= \sigma^2 \cdot \bigbrace{(\alpha^2 + (1-\alpha)^2 + 2)x^{\aug}{x^{\aug}}^{\top} + X^{\top}X}.
	\end{align*}
  Meanwhile, for $(X_{\aug}^{\top}X_{\aug})^{\dagger}$, we use the Sherman-Morrison formula to decompose the change of the rank-1 update $x_{\aug}x_{\aug}^{\top}$.
  Since $P_X^{\perp}x_{\aug} = 0$ and $x_{\aug}^{\top} K x_{\aug} \lesssim \norm{x_{\aug}}^2 / n < 1$.
  Denote by $K = (X^{\top}X)^{\dagger}$, we have
  \begin{align*}
    (X_{\aug}^{\top}X_{\aug})^{\dagger} = K - \frac{K x_{\aug} x_{\aug}^{\top} K}{1 + x_{\aug}^{\top} K x_{\aug}}
  \end{align*}
  To sum up, we have that
  \begin{align*}
    ~& \ex{\tr\bigbracket{(X_{\aug}^{\top}X_{\aug})^{\dagger} (X_{\aug}^{\top}X_{\aug})^{\dagger} X_{\aug}^{\top} \varepsilon^{\aug}{\varepsilon^{\aug}}^{\top} X_{\aug}}} \\
    =~ & \sigma^2 \tr\bigbracket{K} + (\alpha^2 + (1-\alpha)^2) \tr\bigbracket{K K x_{\aug}x_{\aug}^{\top}} \cdot (1 + O(\frac 1 n))
  \end{align*}
  Hence the variance of the minimum norm estimator increases after adding the mixup sample.
\end{proof}

\section{Supplementary Materials for Section \ref{sec_aug}}\label{app_metric}

We present a full list of results using our proposed metrics to study label-invariant transformations on MNIST.
Then, we provide an in-depth study of the data points corrected by applying the rotation transformation.
Finally, we show that we can get a good approximation of our proposed metrics by using 3 random seeds.
The training details are deferred to Appendix \ref{app_exp}.

\paragraph{Further results on label-invariant transformations.}
For completeness, we also present a full list of results for label-invariant transformations on MNIST in Table \ref{tab_decomposition_full}.
The descriptions of these transformations can be found in the survey of \cite{SK19}.

\begin{table}[h!]
	\centering
	\begin{tabular}{l c c c c}
		\toprule
								& Avg. Acc.	& Error score	& Instability	\\
		\midrule
		Baseline    & 98.08\% ($\pm$0.08\%)   & 1.52\%    & 0.95\% \\
		Invert      & 97.29\% ($\pm$0.18\%)   & 1.85\%    & 1.82\% \\
		VerticalFlip  &97.30\% ($\pm$0.09\%)	  & 1.84\%	  & 1.82\% \\
		HorizontalFlip&97.37\% ($\pm$0.09\%)  & 1.83\%	  & 1.72\% \\
		Blur        & 97.90\% ($\pm$0.12\%)   & 1.62\%    & 1.24\% \\
		Solarize    & 97.96\% ($\pm$0.07\%)   & 1.51\%    & 1.27\% \\
		Brightness  & 98.00\%  ($\pm$0.10\%)	  & 1.69\%    & 1.05\% \\
		Posterize   & 98.04\% ($\pm$0.08\%)   & 1.59\%    & 1.12\% \\
		Contrast    & 98.08\% ($\pm$0.10\%)	  & 1.61\%	  & 0.93\% \\
		Color       & 98.10\%  ($\pm$0.06\%)	  & 1.59\%	  & 0.88\% \\
		Equalize    & 98.12\% ($\pm$0.09\%)   & 1.53\%    & 0.93\% \\
		AutoContrast& 98.17\% ($\pm$0.07\%)   & 1.50\%    & 0.87\% \\
		Sharpness   & 98.17\% ($\pm$0.08\%)   & 1.45\%    & 0.92\% \\
		Smooth      & 98.18\% ($\pm$0.02\%)   & 1.47\%    & 0.99\% \\
		TranslateY  & 98.19\%  ($\pm$0.11\%)   & 1.27\%    & 1.13\% \\
		Cutout		  &	98.31\%  ($\pm$0.09\%)   & 1.43\%		&	0.86\% \\
		TranslateX  & 98.37\%  ($\pm$0.07\%)   & 1.21\%    & 1.00\% \\
		ShearY      & 98.54\%  ($\pm$0.07\%)   & 1.17\%    & 0.85\% \\
		ShearX      & 98.59\%  ($\pm$0.09\%)   & \textbf{1.00}\%    & 0.84\% \\
		RandomCropping&	98.61\%  ($\pm$0.05\%)		&	1.01\%		&	0.88\% \\
		Rotation		&	\textbf{98.65}\%  ($\pm$0.06\%)		& 1.08\%		&	\textbf{0.77}\% \\
		\bottomrule
	\end{tabular}
	\caption{Measuring the intrinsic error and instability scores of label-invariant transformations on MNIST. The results are obtained using 9 random seeds.}\label{tab_decomposition_full}
\end{table}

Next we describe the details regarding the experiment on rotations.
Recall that our goal is to show that adding rotated data points provides new information.
We show the result by looking at the data points that are corrected after adding the rotated data points.
Our hypothesis is that the data points that are corrected should be wrong ``with high confidence'' in the baseline model, as opposed to being wrong ``with marginal confidence''.

Here is our result. Among the 10000 test data points, we observe that there are 57 data points which the baseline model predicts wrong but adding rotated data corrected the prediction.
And there are 18 data points which the baseline model predicts correctly but adding rotated data.
Among the 57 data points, the average accuracy of the baseline model over the 9 random seeds is 19.50\%.
Among the 18 data points, the average accuracy of the augmented model is 29.63\%.
The result confirms that adding rotated data corrects data points whose accuracies are low (at 19.50\%).

\paragraph{Efficient estimation.} We further observe that on MNIST, three random seeds are enough to provide an accurate estimation of the intrinsic error score.
Furthermore, we can tell which augmentations are effective versus those which are not effective from the three random seeds.

\begin{table}
	\centering
	\begin{tabular}{l c c c c c c c c}
		\toprule
							& Baseline	& HorizontalFlip & Contrast 	& Cutout & TranslateY	& ShearX & RandomCropping & Rotation		\\
		\midrule
		9 seeds		&	1.75\%		& 1.83\% 	      & 1.61\%	&	1.43\% &	1.27\%  & 1.00\%	& 1.01\% &	1.08\%\\
		3 seeds		&	1.72\%		& 2.13\% 		    & 1.78\%  &	1.54\% &	1.49\%	& 1.10\%  & 1.11\% & 1.16\%\\
		\bottomrule
	\end{tabular}
	\caption{The result of estimating the intrinsic error score using 3 seeds is within 16\% of the estimation result using 9 seeds.}\label{tab_estimate}
\end{table}

\section{Supplementary Materials for Section \ref{sec_exp}}\label{app_exp}

We describe details on our experiments to complement Section~\ref{sec_exp}.
\begin{itemize}
	\item In Appendix~\ref{app_exp_datasets}, we review the datasets used in our experiments.
	\item In Appendix~\ref{app_exp_models}, we describe the models we use on each dataset.
	\item In Appendix~\ref{app_exp_train}, we summarize the training procedures for all experiments.
\end{itemize}

\subsection{Datasets}
\label{app_exp_datasets}

We describe the image and text datasets used in the experiments in more detail.

\begin{itemize}
	\item \textbf{Image datasets.} For image classification tasks, the goal is to determine the object class shown in the image. We select four popular image classification benchmarks in our experiments.

	\item
\textbf{MNIST.} MNIST~\cite{lecun1998gradient} is a handwritten digit dataset with 60,000 training images and 10,000 testing images. Each image has size $28 \times 28$ and belongs to 1 of 10 digit classes. %

\item
\textbf{CIFAR-10.} CIFAR-10~\cite{krizhevsky2009learning} has 60,000 $32 \times 32$ color images in 10 classes, with 6,000 images per class where the training and test sets have 50,000 and 10,000 images, respectively.

\item
		\textbf{CIFAR-100.} CIFAR-100 is similar to CIFAR-10 while CIFAR-100 has 100 more fine-grained classes containing 600 images each. %

	\item
          \textbf{Street view house numbers (SVHN).} This dataset contains color house-number images with 73,257 core images for training and 26,032 digits for testing which is similar in flavor to MNIST.%

				\item
          \textbf{ImageNet Large-Scale Visual Recognition Challenge (ImageNet).} This dataset is a subset of original ImageNet dataset which has over 15M labeled high-resolution images belonging to roughly 22,000 categories. In this dataset, it includes images of 1000 classes, and has roughly 1.3M training images and 50,000 validation images.

				\item
\textbf{Text dataset.} For text classification task, the goal is to understand the sentiment opinions expressed in the text based on the context provided. We choose \textit{Internet movie database (IMDb)}.

\item
     \textbf{Internet movie database (IMDb).} IMDb~\cite{maas-EtAl} consists of a collection of 50,000 reviews, with no more than 30 reviews per movie. The dataset contains 25,000 labeled reviews for training and 25,000 for test.%
\end{itemize}

\subsection{Models}
\label{app_exp_models}

We describe the models we use in the our experiments.

\textbf{Image datasets.} For the experiments on image classification tasks, we consider five different models including multi-layer perceptron (MLP), Wide-ResNet, Shake-Shake (26 2x96d), PyramidNet+ShakeDrop, and ResNet.
\begin{itemize}

	\item     For the MLP model, we reshape the image and feed it into a two layer perceptron, followed by a classification layer.

	\item     For the Wide-ResNet model, we use the standard Wide-ResNet model proposed by~\cite{zagoruyko2016wide}. We select two Wide-ResNet models in our experiment: Wide-ResNet-28-2 with depth 28 and width 2 as well as Wide-ResNet-28-10 with depth 28 and width 10.

	\item     For Shake-Shake, we use the model proposed by~\cite{gastaldi2017shake} with depth 26, 2 residual branches. The first residual block has a width of 96, namely Shake-Shake (26 2x96d).

	\item     For PyramidNet, we choose the model proposed by~\cite{han2017deep} with ShakeDrop~\cite{yamada2018shakedrop}. We set the depth as 272 and alpha as 200 in our experiments. We refer to the model as PyramidNet+ShakeDrop.

	\item     For ResNet, we select the model proposed by~\cite{he2016deep} with 50 layers. We refer to the model as ResNet-50.

	\item	For CIFAR-10 and CIFAR-100, after applying the transformations selected by Algorithm~\ref{alg_unc}, we apply randomly cropping, horizontal flipping, cutout and mixup.

	\item     For ImageNet, we apply random resized cropping, horizontal flipping, color jitter and mixup after performing Algorithm~\ref{alg_unc}.

	\item     Similarly, for SVHN we apply randomly cropping and cutout after using Algorithm~\ref{alg_unc}.
\end{itemize}

\paragraph{Text dataset.} For the experiments on text classification tasks, we use a state-of-the-art language model call BERT~\cite{BERT18}. We choose the \bertlarge uncased model, which contains 24 layers of transformer.

\subsection{Traning procedures.}
\label{app_exp_train}

We describe the training procedures for our experiments.

\paragraph{Implementation details.} During the training process, for each image in a mini-batch during the optimization, we apply Algorithm~\ref{alg_unc} to find the most uncertain data points and use them to training the model.
We only use the augmented data points to train the model and do not use the original data points.
The specific transformations we use and the default transformations are already mentioned in Section~\ref{sec_exp_setup}.

\paragraph{Image classification tasks.} We summarize the model hyperparameters for all the models in Table~\ref{tab:settings}.

\begin{table*}
\begin{center}

\begin{tabular}{c c c c c c c}
\toprule
Dataset & Model & Batch Size & LR & WD & LR Schedule & Epoch \\
\midrule
MNIST  & MLP & 500 & 0.1 & 0.0001 & cosine & 100 \\
\midrule
CIFAR-10 & Wide-ResNet-28-2       & 128 &  0.1 &  0.0005 & multi-step &  200 \\
CIFAR-10 & Wide-ResNet-28-10      & 128 &  0.1 &  0.0005 &     cosine &  200 \\
CIFAR-10 & Shake-Shake (26 2x96d) & 128 & 0.01 &   0.001 &     cosine & 1800 \\
CIFAR-10 & PyramidNet+ShakeDrop   &  64 & 0.05 & 0.00005 &     cosine & 1800 \\
\midrule
CIFAR-100 & Wide-ResNet-28-10      & 128 &   0.1 &  0.0005 &     cosine &  200 \\
\midrule
SVHN & Wide-ResNet-28-10 & 128 & 0.005 &  0.005 & cosine &  200 \\
\midrule
ImageNet &     ResNet-50 & 160 &   0.1 & 0.0001 & multi-step &  270 \\
\bottomrule
\end{tabular}
\end{center}
\caption{Model hyperparameters on MNIST and CIFAR-10/CIFAR-100. LR means learning rate, WD means weight decay and LR Schedule means learning rate scheduling.
	The parameters we choose are based on the  parameters used for the model previously.}
\label{tab:settings}
\end{table*}

\paragraph{Text classification tasks.} For text classification experiments, we follow the \bertlarge fine-tune procedure proposed by~\cite{BERT18}. We set the dropout rate of 0.1 and apply grid search to tune the learning rate from $\{1e^{-5}, 2e^{-5}, 3e^{-5}\}$, batch size of 32 and 64, and the number of epochs from 2 to 10.

\section{Additional Experiments on Mixup}\label{app_mixup}

We present additional experiments to provide micro insights on the effects of applying the mixup augmentation.
First, we consider the performance of mixing same class images versus mixing different class images.
We show that on CIFAR-10, mixup can reduce the intrinsic error score but mixing same class images does not reduce the intrinsic error score.
Second, we consider the effect of mixup on the size of the margin of the classification model.
We show that mixup corrects data points whih have large margins from the baseline model.

\paragraph{Training procedures.} We apply the MLP model on MNIST and the Wide-ResNet-28-10 model on CIFAR-10.
For the training images, there is a default transformation of horizontal flipping with probability 0.5 and random cropping.
Then we apply the mixup transformation.

\paragraph{Comparing mixup with $\ell_2$-regularization.}
We tune the optimal $\ell_2$ regularization parameter on the baseline model.
Then, we run mixup with the optimal $\ell_2$ parameter.
We also consider what happens if we only mix data points with the same class labels.
Since mixup randomly mixes two images, the mixing ratio for same class images is 10\%.
Table \ref{tab_same_class_mixup} shows the results.

We make several observations.
\begin{itemize}
	\item[(i)] Adding $\ell_2$ regularization reduces the instability score. This is expected since a higher value of $\lambda$ reduces the variance part in equation \eqref{eq_bv}.
	\item[(ii)] On MNIST, mixing same class images improves stability compared to mixup. This is consistent with our results in Section \ref{sec_aug}.
Furthermore, optimally tuning the regularization parameter outperforms mixup.
\item[(iii)] on CIFAR-10, mixing same class images as well as mixup reduces the instability score by more than 10\%.
Furthermore, we observe that mixing different class images also reduces the intrinsic error score.
\end{itemize}

\begin{table}[h!]
	\centering
	\begin{tabular}{l c c c c c c c}
		\toprule
		\multirow{2}{*}{Dataset}
		&\multicolumn{3}{c}{MNIST} & \multicolumn{3}{c}{CIFAR-10} \\ \cmidrule[0.4pt](lr{0.2em}){2-4} \cmidrule[0.4pt](lr{0.2em}){5-7}
		& Avg. Acc.	& Error score	& Instability
		& Avg. Acc.	& Error score	& Instability \\
		\midrule
		Baseline                        & 98.07\%$\pm$0.08\%   & 1.52\%    & 0.95\%  & 87.33\%$\pm$0.31\% & 9.61\% & 7.97\% \\
		Tuning $\ell_2$ reg.	&	\textbf{98.31\%}$\pm$0.05\%	  &	\textbf{1.50\%}    &	0.67\%  & 91.23\%$\pm$0.15\% & 6.55\% & 5.50\% \\
		Mixup-same-class w/ $\ell_2$    & 98.15\%$\pm$0.06\%   & 1.78\%    & \textbf{0.61\%}  & 91.59\%$\pm$0.21\% & 6.55\% & \textbf{4.96\%} \\
		Mixup w/ $\ell_2$ reg.& 98.21\%$\pm$0.05\%  & \textbf{1.58\%}    & 0.70\%  & \textbf{92.44\%}$\pm$0.32\% & \textbf{5.71\%} & \textbf{4.75\%} \\
		\bottomrule
	\end{tabular}
	\caption{Measuring the intrinsic error and instability scores of mixing same class images versus mixup. The optimal $\ell_2$ regularization parameter is $10^{-3}$ for MNIST and $5\times 10^{-4}$ for CIFAR-10.}
	\label{tab_same_class_mixup}
\end{table}

\paragraph{Effects over margin sizes on the cross-entropy loss.}
We further explore why mixup can improve the average prediction accuracy on CIFAR-10.
Intuitively, when mixing two images with different losses, the resulting image will have a loss which is lower than the image with the higher loss.
For the classification setting, instead of measuring the losses, we look at the margin, which is the difference between the largest predicted probability and the second largest predicted probability over the 10 classes.
It is well-known that a larger margin implies better robustness and generalization properties \cite{FHT01,HTF09}.

By comparing the margin distribution between the baseline model versus the one after applying the mixup augmentations, we provide several interesting observations which shed more light on why mixup works.
Figure~\ref{fig:cifar10_margin} shows the results.
First, we observe that among the images for which the baseline model predicts incorrectly, mixup is able to correct those images whose margin is very large!
Second, we observe that among the images for which the baseline model predicts correctly, applying mixup increased the size of the margin for the images whose margin is small on the baseline model.

\begin{figure*}[h!]
	\begin{subfigure}[b]{0.28\textwidth}
		\begin{center}
		\includegraphics[width=0.9\columnwidth]{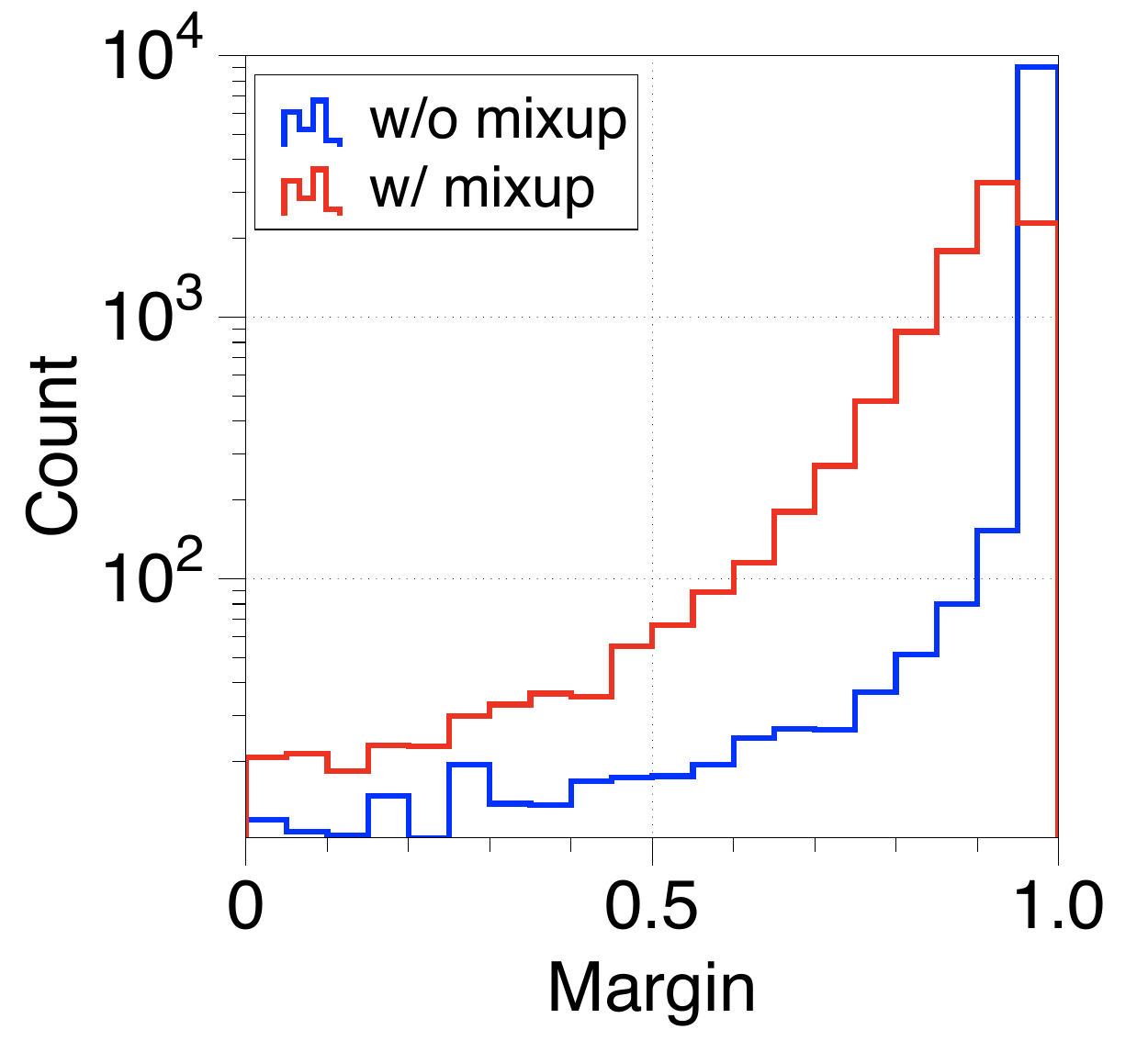}
		\end{center}
		\caption{Correctly predicted images on the baseline model.}
	\end{subfigure}
	\hfill
	\begin{subfigure}[b]{0.28\textwidth}
		\begin{center}
		\includegraphics[width=0.9\columnwidth]{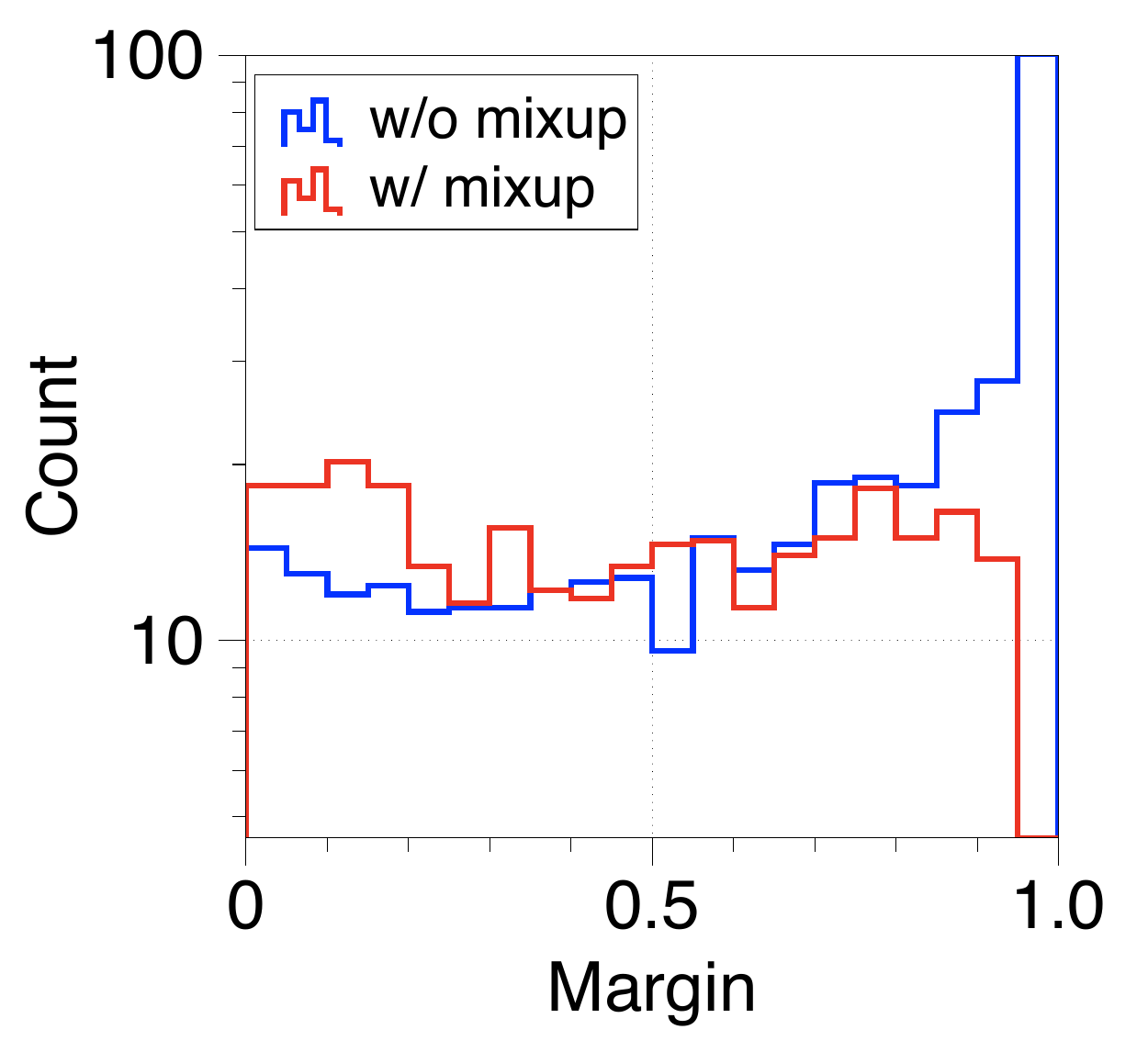}
		\end{center}
		\caption{Incorrectly predicted images on the baseline model.}
	\end{subfigure}
	\hfill
	\begin{subfigure}[b]{0.28\textwidth}
		\begin{center}
		\includegraphics[width=0.9\columnwidth]{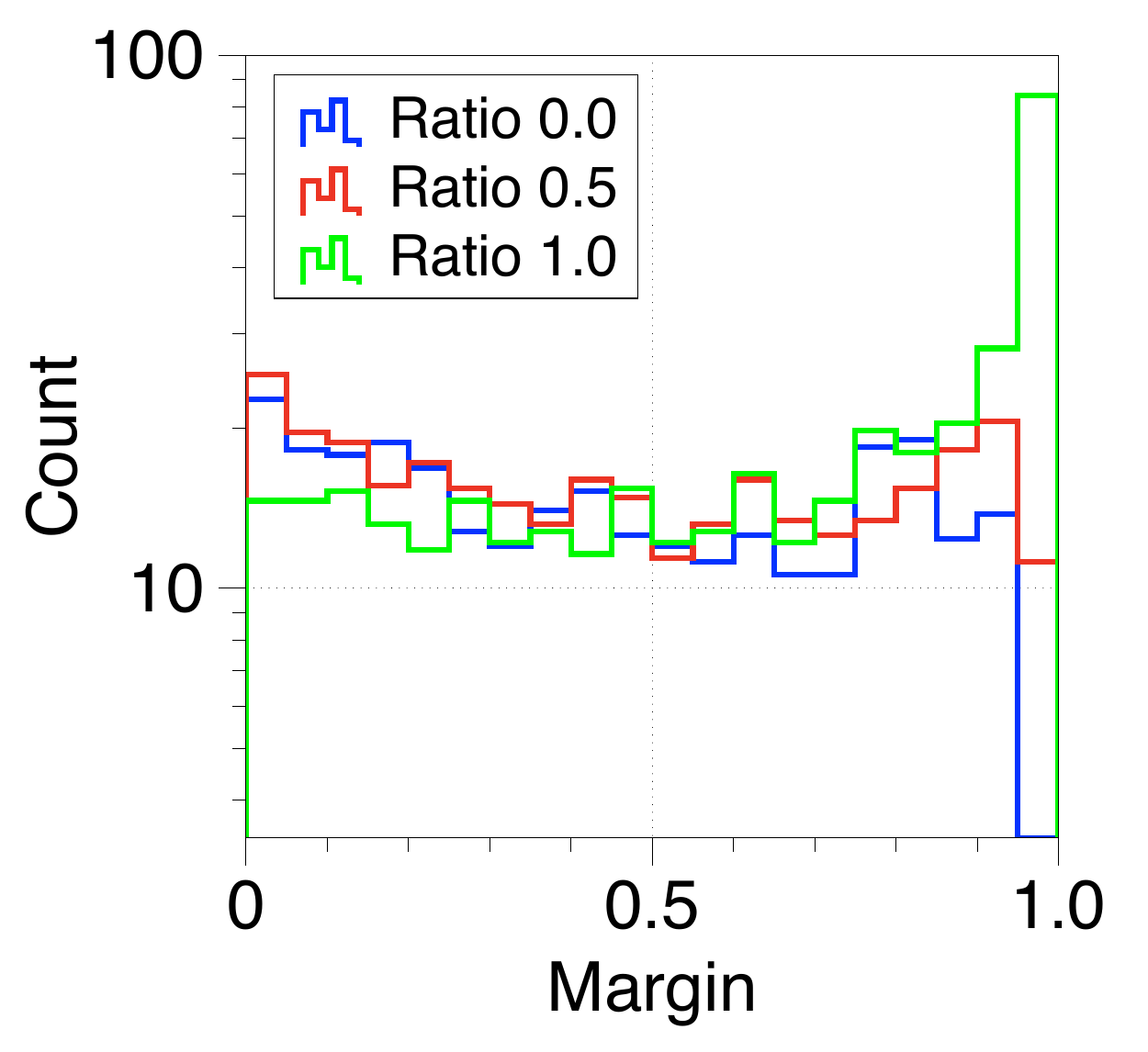}
		\end{center}
		\caption{Corrected images after applying mixup with a certain mixing ratio.}
		\label{fig:cifar10_ratio_margin}
	\end{subfigure}
	\caption{On CIFAR-10, the images corrected by applying the mixup augmentation have large margins on the baseline model. Here the margin is defined as the largest probability minus the second largest probability predicted by the cross entropy loss.
	By varying the mixing ratio, we observe that mixing same class images only does not correct the the data points which are predicted incorrectly with large margins on the baseline model.}
	\label{fig:cifar10_margin}
\end{figure*}

We further observe that the above effect arises from mixing different class images.
We compare the effects of three settings, including i) Mix images from different classes only (Ratio 0.0).
ii) For half of the time, mix images from different classes; Otherwise, mix images from the same classes (Ratio 0.5).
iii) Mix images from the same classes only (Ratio 1.0).
In Figure~\ref{fig:cifar10_ratio_margin}, we observe the effect of correcting large margin data points does not occur when we only mix same class images.

\end{document}